\documentclass[10pt,twocolumn,journal]{IEEEtran}

\usepackage{xr}
\usepackage{cite}
\usepackage{amsmath,graphicx}
\usepackage[colorlinks=true,linkcolor=black,citecolor=black,urlcolor=black]{hyperref}
\usepackage{footnote}
\usepackage{booktabs}
\usepackage{tablefootnote}

\usepackage[margin=1in]{geometry}
\usepackage{amssymb}
\usepackage{mathrsfs}

\usepackage[mathscr]{eucal}
\usepackage{epsfig,epsf,psfrag}
\usepackage{amssymb,amsmath,amsfonts,latexsym}
\usepackage{amsmath,graphicx,bm,xcolor,url}
\usepackage[caption=false]{subfig} 
\usepackage{fixltx2e}
\usepackage{array}
\usepackage{verbatim}
\usepackage{bm}
\usepackage{algpseudocode, cite}
\usepackage{algorithm}
\usepackage{verbatim}
\usepackage{textcomp}
\usepackage{mathrsfs}
\usepackage{epstopdf}
\usepackage{relsize}
\usepackage{cleveref} 
\usepackage{subfig}

 
\catcode`~=11 \def\UrlSpecials{\do\~{\kern -.15em\lower .7ex\hbox{~}\kern .04em}} \catcode`~=13 

\allowdisplaybreaks[3]

\newcommand{\calA}{\mathcal{A}}

\newcommand{\calM}{\mathcal{M}}
\newcommand{\calN}{\mathcal{N}}

\newcommand{\calS}{\mathcal{S}}
\newcommand{\calT}{\mathcal{T}}

\newcommand{\scrK}{\mathscr{K}}

\newcommand{\ba}{\mathbf{a}}
\newcommand{\bA}{\mathbf{A}}
\newcommand{\bb}{\mathbf{b}}
\newcommand{\bB}{\mathbf{B}}

\newcommand{\bC}{\mathbf{C}}

\newcommand{\bD}{\mathbf{D}}
\newcommand{\be}{\mathbf{e}}
\newcommand{\bE}{\mathbf{E}}

\newcommand{\bh}{\mathbf{h}}
\newcommand{\bH}{\mathbf{H}}

\newcommand{\bI}{\mathbf{I}}

\newcommand{\bL}{\mathbf{L}}

\newcommand{\bM}{\mathbf{M}}

\newcommand{\bp}{\mathbf{p}}

\newcommand{\bq}{\mathbf{q}}

\newcommand{\br}{\mathbf{r}}
\newcommand{\bR}{\mathbf{R}}
\newcommand{\bs}{\mathbf{s}}
\newcommand{\bS}{\mathbf{S}}
\newcommand{\bt}{\mathbf{t}}
\newcommand{\bT}{\mathbf{T}}

\newcommand{\bv}{\mathbf{v}}
\newcommand{\bV}{\mathbf{V}}
\newcommand{\bw}{\mathbf{w}}
\newcommand{\bW}{\mathbf{W}}
\newcommand{\bx}{\mathbf{x}}
\newcommand{\bX}{\mathbf{X}}

\newcommand{\bY}{\mathbf{Y}}
\newcommand{\bz}{\mathbf{z}}
\newcommand{\bZ}{\mathbf{Z}}


\newcommand{\rmF}{\mathrm{F}}


\newcommand{\bbE}{\mathbb{E}}

\newcommand{\bbN}{\mathbb{N}}

\newcommand{\bbP}{\mathbb{P}}

\newcommand{\bbR}{\mathbb{R}}



\DeclareMathAlphabet{\mathbsf}{OT1}{cmss}{bx}{n}
\DeclareMathAlphabet{\mathssf}{OT1}{cmss}{m}{sl}

\DeclareSymbolFont{bsfletters}{OT1}{cmss}{bx}{n}  
\DeclareSymbolFont{ssfletters}{OT1}{cmss}{m}{n}
\DeclareMathSymbol{\bsfGamma}{0}{bsfletters}{'000}
\DeclareMathSymbol{\ssfGamma}{0}{ssfletters}{'000}
\DeclareMathSymbol{\bsfDelta}{0}{bsfletters}{'001}
\DeclareMathSymbol{\ssfDelta}{0}{ssfletters}{'001}
\DeclareMathSymbol{\bsfTheta}{0}{bsfletters}{'002}
\DeclareMathSymbol{\ssfTheta}{0}{ssfletters}{'002}
\DeclareMathSymbol{\bsfLambda}{0}{bsfletters}{'003}
\DeclareMathSymbol{\ssfLambda}{0}{ssfletters}{'003}
\DeclareMathSymbol{\bsfXi}{0}{bsfletters}{'004}
\DeclareMathSymbol{\ssfXi}{0}{ssfletters}{'004}
\DeclareMathSymbol{\bsfPi}{0}{bsfletters}{'005}
\DeclareMathSymbol{\ssfPi}{0}{ssfletters}{'005}
\DeclareMathSymbol{\bsfSigma}{0}{bsfletters}{'006}
\DeclareMathSymbol{\ssfSigma}{0}{ssfletters}{'006}
\DeclareMathSymbol{\bsfUpsilon}{0}{bsfletters}{'007}
\DeclareMathSymbol{\ssfUpsilon}{0}{ssfletters}{'007}
\DeclareMathSymbol{\bsfPhi}{0}{bsfletters}{'010}
\DeclareMathSymbol{\ssfPhi}{0}{ssfletters}{'010}
\DeclareMathSymbol{\bsfPsi}{0}{bsfletters}{'011}
\DeclareMathSymbol{\ssfPsi}{0}{ssfletters}{'011}
\DeclareMathSymbol{\bsfOmega}{0}{bsfletters}{'012}
\DeclareMathSymbol{\ssfOmega}{0}{ssfletters}{'012}


\newcommand{\balpha}{\bm{\alpha}}

\newcommand{\bPi}{\bm{\Pi}}








\DeclareMathOperator{\rank}{rank}


\newtheorem{theorem}{Theorem} 
\newtheorem{lemma}[theorem]{Lemma}

\newenvironment{proof}[1][Proof]{\begin{trivlist}
\item[\hskip \labelsep {\bfseries #1}]}{\end{trivlist}}

\newcommand{\qednew}{\nobreak \ifvmode \relax \else
      \ifdim\lastskip<1.5em \hskip-\lastskip
      \hskip1.5em plus0em minus0.5em \fi \nobreak
      \vrule height0.75em width0.5em depth0.25em\fi}


\newcommand*{\QED}{\hfill\ensuremath{\blacksquare}}

\begin{document}

\title{Model Selection for Nonnegative Matrix Factorization by Support Union Recovery}

\author{Zhaoqiang Liu
\thanks{The author is with with the Department of Electrical and Computer Engineering, NUS.}
}


\maketitle

\begin{abstract}
Nonnegative matrix factorization (NMF) has been widely used in  machine learning and signal processing because of its non-subtractive, part-based property which enhances interpretability. It is often assumed that the latent dimensionality (or the number of components) is given. Despite the large amount of algorithms designed for NMF, there is little literature about automatic model selection for NMF with theoretical guarantees. In this paper, we propose an algorithm that first calculates an empirical second-order moment from the empirical fourth-order cumulant tensor, and then estimates the latent dimensionality by recovering the support union (the index set of non-zero rows) of a matrix related to the empirical second-order moment. By assuming a generative model of the data with additional mild conditions, our algorithm provably detects the true latent dimensionality. We show on synthetic examples that our proposed algorithm is able to find an approximately correct number of components. 
\end{abstract}
%

\section{Introduction}
\label{sec:intro}

In a nonnegative matrix factorization (NMF) problem, we are given a data matrix $\bV \in \bbR^{F\times N}$, and we seek non-negative factor matrices $\bW \in \bbR^{F\times K}$, $\bH \in \bbR^{K \times N}$ such that a certain distance between $\bV$ and $\bW \bH$ is minimized. To reduce the data dimension and for the purpose of efficient computation, the integer $K$, which is said to be the latent dimensionality or the number of components, is usually chosen such that $K(F+N) \ll FN$. Since the publication of the seminar paper~\cite{lee2001algorithms} in 2000, NMF has been a popular topic in machine learning~\cite{cichocki2009nonnegative} and signal processing~\cite{buciu2008non}. There are many fundamental algorithms to approximately solve the NMF problem~\cite{lee2001algorithms, kim2008nonnegative, cichocki2007hierarchical} 
with the implicit assumption that an effective number of the  latent dimensionality is known a priori. 

Despite the practical success of these fundamental algorithms, the estimation of the latent dimensionality remains an important issue. For example, researchers may wonder whether we can achieve better approximation accuracy with significantly less running time by selecting a better $K$ as the input of the algorithm. 
Unfortunately, there is generally little literature discussing the model selection problem for NMF. Moreover, the methods proposed in papers about detecting latent dimensionality for NMF~\cite{tan2013automatic, cemgil2009bayesian, winther2007bayesian, liu65rank} either lack theoretical guarantees or require rather stringent conditions on the generative model of data.

\subsection{Main Contributions}\label{sec:main_contr}

We assume that each column $\bv$ of the data matrix $\bV = [\bv_1, \bv_2,\ldots,\bv_N] \in \bbR^{F\times N}$ is sampled from the following generative model
\begin{equation}\label{eq:nonnegICA}
 \bv = \bW \bh +\bz,
\end{equation}
where $\bW = [\bw_1,\bw_2,\ldots,\bw_K] \in \bbR^{F\times K}_+$ is the mixing matrix (or the ground-truth non-negative dictionary matrix) and we assume that $\rank(\bW) = K$. $\bh \in \bbR^{K}$ is a latent random vector with independent coordinates\footnote{Similar to that in~\cite{li2016recovery}, we will not require $\bh$ to be non-negative.}, and $\bz \in \bbR^{F}$ is a multivariate Gaussian random vector. $\bz$ is assumed to be independent with $\bh$. We write $\bH = [\bh_1,\bh_2,\ldots,\bh_N] \in \bbR^{K\times N}$. In the context of this generative model, our goal is to find the number of columns of $\bW$ from the observed matrix $\bV$. This generative model can be viewed as a non-negative variant of that for independent component analysis (ICA)~\cite{hyvarinen2004independent}. 

For the data matrix $\bV \in \bbR^{F\times N}$ generated from the above model, we first calculate an empirical second-order moment, denoted as $\hat{\bM}_2 \in \bbR^{F\times F}$, from the empirical fourth-order cumulant tensor. We prove that $\hat{\bM}_2$ approximates its expectation, denoted as $\bM_2$, well with high probability when $N$ is sufficiently large.  We also show that $\bM_2$ can be written as $\bM_2 = \bM_2 \bX^*$, where $\bX^* \in \bbR^{F \times F}$ contains exactly $K$ non-zero rows. Finally, we prove that under certain conditions, an $\ell_1/\ell_2$ block norm minimization problem (cf. \eqref{eq:NMF_MMV} to follow) over $\hat{\bM}_2$ is able to detect the correct number of column of $\bW$ from the recovery of a support union.

\subsection{Notations}\label{sec:notations}

We use capital boldface letters to denote matrices and we use lower-case boldface letters to denote vectors. We use $a_{ij}$ or $[\bA]_{ij}$ to denote the $(i,j)$-th entry of $\bA$. $\left[N\right]$ represents $\{1,2,\cdots,N\}$ for any positive integer $N$. For $\bX \in \bbR^{L \times M}$ and any $l \in [L]$, $m\in[M]$, we use $\underline{\bx}_l$, $\bx_m$ to denote the $l$-th row and the $m$-th column of $\bX$, respectively. We write $\underline{\bV}_\mathscr{K} := \bV(\mathscr{K},\colon)$ as the rows of $\bV$ indexed by $\mathscr{K}$, and $\bV_\mathscr{K} :=\bV \left(\colon,\mathscr{K}\right)$ denotes the columns of $\bV$ indexed by $\mathscr{K}$.
$\|\bV\|_1, \|\bV\|_2, \|\bV\|_{\infty}, \|\bV\|_{\rmF}$ represents the $1$-norm, the spectral norm, the infinity norm and the Frobenius norm of $\bV$, respectively. 
Let $\bV_{1} \in \mathbb{R}^{F_{1}\times N}$ and $\bV_{2} \in \mathbb{R}^{F_{2}\times N}$. We denote by $\left[\bV_{1}; \bV_{2}\right]$ the vertical concatenation of the two matrices. $\mathrm{Diag}(\bw)$ represents the diagonal matrix whose diagonal entries are given by $\bw$. The support of a vector $\bx$ is denoted as $\mathrm{supp}(\bx):=\{i:x_i \ne 0\}$. The support union of a matrix $\bX$ with $N$ columns is defined as $\mathrm{Supp}(\bX):=\cup_{n=1}^N \mathrm{supp}(\bx_n)$.

\section{Tensor Methods}\label{sec:tensor_methods}

In this section, we calculate an empirical second moment $\hat{\bM}_2$ using a tensor method, and we prove that the empirical second moment is close to its expectation $\bM_2$ with high probability when the sample size $N$ is sufficiently large. 

\subsection{The Derivation of $\bM_2$ and $\hat{\bM}_2$}

Let $\bv$ be a random vector corresponding to the generative model~\eqref{eq:nonnegICA} with $\bbE[h_k] =0$ and $\bbE[z_f] =0$ for $k \in[K], f\in[F]$. We have the following lemma which says that $\bM_2$ can be written in a nice form.
\begin{lemma}[\cite{comon2010handbook, anandkumar2014tensor}]\label{lem:tensor_ica}
 Define 
 \begin{equation}
  \calM_4 := \bbE[\bv \otimes \bv \otimes \bv \otimes \bv] - \calT,
 \end{equation}
where $[\bv \otimes \bv \otimes \bv \otimes \bv]_{ijlm} = v_i v_j v_l v_m$ and $\calT$ is the fourth-order tensor with
\begin{align}
 [\calT]_{ijlm} &:= \bbE[v_{i}v_{j}]\bbE[v_{l}v_{m}] + \bbE[v_{i}v_{l}]\bbE[v_{j}v_{m}] \nonumber\\
 &+ \bbE[v_{i}v_{m}]\bbE[v_{j}v_{l}]
\end{align}
for all $i,j,l,m \in [F]$.
Let\footnote{It is implicitly assumed in~\cite{anandkumar2014tensor} that $\mathrm{Var}[h_k]=1$, and thus $\kappa_k = \bbE[h_k^4]-3$.} $\kappa_k = \bbE[h_k^4]-3\bbE[h_k^2]$ for each $k \in [K]$. Then 
\begin{equation}
 \calM_4 = \sum_{k=1}^K \kappa_k \bw_k \otimes \bw_k \otimes \bw_k \otimes \bw_k.
\end{equation}
In addition, we have that 
\begin{equation}
 \bM_2 := \calM_4 (\bI,\bI,\bs,\bt) = \sum_{k=1}^K \kappa_k (\bs^T \bw_k) (\bt^T \bw_k) \bw_k \otimes \bw_k 
\end{equation}
for any $\bs,\bt \in \bbR^{F}$. Here for matrices $\bV_1 \in \bbR^{F \times F_1},\bV_2 \in \bbR^{F\times F_2},\bV_3 \in \bbR^{F \times F_3},\bV_4 \in \bbR^{F \times F_4}$, $\calM_4(\bV_1,\bV_2,\bV_3,\bV_4)$ is defined as the tensor whose $(i_1,i_2,i_3,i_4)$-th entry is
\begin{align}
 & \left[\calM_4(\bV_1,\bV_2,\bV_3,\bV_4)\right]_{i_1,i_2,i_3,i_4} = \nonumber\\ 
 & \sum_{j_1,j_2,j_3,j_4 } \left[\calM_4\right]_{j_1,j_2,j_3,j_4} [\bV_1]_{j_1,i_1} [\bV_2]_{j_2,i_2} [\bV_3]_{j_3,i_3} [\bV_4]_{j_4,i_4}.
\end{align}
\end{lemma}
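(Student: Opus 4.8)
The statement is the classical fourth-order-cumulant factorization underlying (non-negative) ICA: $\calM_4$ is precisely the fourth-order cumulant tensor of the zero-mean vector $\bv$, and the plan is to exploit three structural properties of cumulants --- additivity over independent summands, vanishing of all cumulants of order $\ge 3$ for Gaussian vectors, and multilinearity under linear maps --- to peel off the noise $\bz$ and then diagonalize along the columns of $\bW$.

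Concretely, I would first check the identity $[\calM_4]_{ijlm} = \cum_4(v_i,v_j,v_l,v_m)$, which is just the moment-to-cumulant relation for a zero-mean random vector: the tensor $\calT$ collects exactly the three ``disconnected'' pairings of $\{i,j,l,m\}$ that one subtracts from the raw fourth moment. Writing $\bv = \bu + \bz$ with $\bu := \bW\bh$ and $\bu \indep \bz$, the substitution into $\bbE[v_i v_j v_l v_m]$ and into $\calT_{ijlm}$ produces, after using $\bbE[h_k]=0=\bbE[z_f]$, only three surviving groups of terms: the pure-$\bu$ moments, the pure-$\bz$ moments, and the mixed ``two-and-two'' products $\bbE[u_\bullet u_\bullet]\bbE[z_\bullet z_\bullet]$; the mixed products appear identically in the expansion of $\bbE[v_iv_jv_lv_m]$ and of $\calT_{ijlm}$ and therefore cancel. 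What is left is
\[
[\calM_4]_{ijlm} \;=\; \cum_4(u_i,u_j,u_l,u_m) \;+\; \cum_4(z_i,z_j,z_l,z_m),
\]
and the second term vanishes by Isserlis'/Wick's theorem for the Gaussian $\bz$. Finally I expand $u_f = \sum_k W_{fk} h_k$: by independence and zero-mean of the coordinates of $\bh$, a product $\bbE[h_a h_b h_c h_d]$ is nonzero only when all four indices agree (contributing $\bbE[h_k^4]$) or when they split into two equal pairs (contributing $\bbE[h_a^2]\bbE[h_c^2]$), and the ``two equal pairs'' contributions are exactly cancelled by the three second-moment products hidden inside $\cum_4$, leaving $\sum_k\big(\bbE[h_k^4] - 3(\bbE[h_k^2])^2\big)\, W_{ik}W_{jk}W_{lk}W_{mk}$. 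Since $[\bw_k \otimes \bw_k \otimes \bw_k \otimes \bw_k]_{ijlm} = W_{ik}W_{jk}W_{lk}W_{mk}$, this is $\calM_4 = \sum_k \kappa_k\, \bw_k \otimes \bw_k \otimes \bw_k \otimes \bw_k$, with $\kappa_k$ the fourth cumulant of $h_k$ (which equals the stated $\bbE[h_k^4]-3\bbE[h_k^2]$ under the normalization $\mathrm{Var}[h_k]=1$ flagged in the footnote).

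The formula for $\bM_2$ is then immediate: unfolding the definition of $\calM_4(\bI,\bI,\bs,\bt)$, the two identity arguments leave the first two tensor indices free while the last two are contracted against $\bs$ and $\bt$, so $[\bM_2]_{i_1 i_2} = \sum_{j_3,j_4} [\calM_4]_{i_1 i_2 j_3 j_4}\, s_{j_3} t_{j_4}$; substituting the rank-one decomposition and carrying out the contractions $\sum_{j_3} W_{j_3 k}\, s_{j_3} = \bs^T\bw_k$ and $\sum_{j_4} W_{j_4 k}\, t_{j_4} = \bt^T \bw_k$ gives $\bM_2 = \sum_k \kappa_k (\bs^T\bw_k)(\bt^T\bw_k)\, \bw_k \otimes \bw_k$.

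I expect the only real friction to be organizational rather than conceptual: keeping track of the $3$ pairing terms in $\calT$, the $6$ mixed ``two-and-two'' products, and the enumeration of index patterns that make $\bbE[h_a h_b h_c h_d]$ survive --- this is exactly the standard moment--cumulant combinatorics. The cleanest write-up is therefore to invoke cumulant additivity and multilinearity (as in \cite{comon2010handbook}) as black boxes and to spell out in detail only the two model-specific points: that the Gaussian noise contributes zero fourth cumulant, and that coordinate-wise independence of $\bh$ collapses the multilinear sum to the diagonal. A minor point to state carefully is the precise form of $\kappa_k$ --- in full generality it is $\bbE[h_k^4]-3(\bbE[h_k^2])^2$, reducing to the displayed expression exactly when $\mathrm{Var}[h_k]=1$.
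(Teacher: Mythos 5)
Your proof is correct, and it is essentially the argument the paper relies on: the paper gives no proof of this lemma itself (it is imported from the cited references), and your route --- identifying $[\calM_4]_{ijlm}$ with the fourth cumulant of the zero-mean $\bv$ via the three-pairing subtraction, using additivity over the independent summands $\bW\bh$ and $\bz$, the vanishing of Gaussian fourth cumulants, and multilinearity plus coordinate independence of $\bh$ to collapse to the diagonal, then contracting the last two modes against $\bs$ and $\bt$ --- is exactly the standard derivation in those references. Your closing caveat is also well taken: in general the coefficient is $\bbE[h_k^4]-3(\bbE[h_k^2])^2$, which agrees with the displayed $\kappa_k = \bbE[h_k^4]-3\bbE[h_k^2]$ only under the unit-variance normalization acknowledged in the paper's footnote.
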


We calculate $\hat{\bM}_2$ 
from the sample matrix $\bV$. Let 
\begin{equation}
 \hat{\calM}_4 := \frac{\sum_{n=1}^N \bv_n \otimes \bv_n \otimes \bv_n \otimes \bv_n}{N} - \hat{\calT},
\end{equation}
where for $1 \le i,j,l,m \le F$,
\begin{align}
 &[\hat{\calT}]_{ijlm} = \frac{\sum_{n=1}^N v_{i,n}v_{j,n}}{N} \frac{\sum_{n=1}^N v_{l,n}v_{m,n}}{N} + \nonumber\\
 &\frac{\sum_{n=1}^N v_{i,n}v_{l,n}}{N} \frac{\sum_{n=1}^N v_{j,n}v_{m,n}}{N} + \frac{\sum_{n=1}^N v_{i,n}v_{m,n}}{N} \frac{\sum_{n=1}^N v_{j,n}v_{l,n}}{N}.
\end{align}
Denoting $\hat{\bM}_2$ as 
\begin{equation}
 \hat{\bM}_2 = \hat{\calM}_4 (\bI,\bI,\bs,\bt).
\end{equation}
We have that $\bbE[\hat{\bM}_2] = \bM_2$. For simplicity, we take $\bs=\bt = \be \in \bbR^{F}$, where $\be$ is the vector of all ones. For any $k \in [K]$, because $\bw_k \ne \mathbf{0}$, we have that $\be^T\bw_k >0$. In addition, if $\kappa_k \ne 0$, let $\alpha_k = \kappa_k (\be^T\bw_k)^2$, we have $\alpha_k \ne 0$ and 
\begin{equation}\label{eq:M2_form1}
 \bM_2 = \sum_{k=1}^K \alpha_k \bw_k \bw_k^T.
\end{equation}
Moreover, now we have that 
\begin{equation}
 [\hat{\bM}_2]_{ij} = \sum_{l=1}^F \sum_{m=1}^F [\hat{\calM}_4]_{ijlm}.
\end{equation}

\subsection{Bounding the Distance between $\bM_2$ and $\hat{\bM}_2$}

Let $\bE = \hat{\bM}_2 - \bM_2$, and assume that all the coordinates of $\bh$ are identically and independently distributed with $m_p := \bbE [h_k^p]$ for all $k \in [K]$, $p \in \bbN$. In particular, we assume that $m_1 = 0$ and $m_4 \ne 3m_2$. Let $M_8 = m_8 + m_7 m_1 + m_6 m_2 +m_6m_1^2 + \ldots + m_2^4 + m_2^3m_1^2 + m_2^2m_1^4+m_2 m_1^6 +m_1^8$, $M_4 = m_4 + m_3m_1 +m_2^2 + m_2 m_1^2 +m_1^4$ and $M = \max\{M_8,M_4^2\}$. Denote $W_{\max}$ as $W_{\max}:= \max_{f,k}{w_{fk}}$. Suppose that $\bz \sim \calN(\mathbf{0},\sigma^2\bI)$ and let $\Delta = \max\{\sigma,1\}$. From the following lemma, we can see that if $N$ is sufficiently large, the distance between $\bM_2$ and $\hat{\bM}_2$ (with respect to Frobenius norm) is sufficiently small with high probability. 
\begin{lemma}\label{lem:bd_bE}
 For any $\delta \in (0,1)$, we have that with probability at least $1-\delta$,
 \begin{equation}
  \|\bE\|_\rmF < \frac{117 \sqrt{70M} W_{\max}^4 K^4 \Delta^4 F^3 }{\sqrt{\delta N}}.
 \end{equation}
\end{lemma}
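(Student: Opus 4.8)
## Proof Proposal for Lemma \ref{lem:bd_bE}

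The plan is to bound $\|\bE\|_\rmF^2 = \sum_{i,j} E_{ij}^2$ by controlling $\bbE[\|\bE\|_\rmF^2]$ and then invoking Markov's inequality, since $\bbE[\bE] = \bzero$ already. Concretely, since $\bbE[\hat{\bM}_2] = \bM_2$, we have $\bbE[\|\bE\|_\rmF^2] = \sum_{i,j} \var[[\hat{\bM}_2]_{ij}]$, so it suffices to bound the variance of each entry of $\hat{\bM}_2$ by something of order $M W_{\max}^8 K^8 \Delta^8 F^4 / N$, after which Markov gives $\bbP(\|\bE\|_\rmF \ge t) \le \bbE[\|\bE\|_\rmF^2]/t^2$ and choosing $t$ to absorb the $1/\delta$ yields the claim. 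The constant $117\sqrt{70}$ will emerge from tracking the combinatorial constants.

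First I would write $[\hat{\bM}_2]_{ij} = \sum_{l,m} [\hat{\calM}_4]_{ijlm}$ explicitly in terms of the empirical averages, i.e.\ as a sum of a fourth-order empirical moment term $\frac1N\sum_n v_{i,n} v_{j,n} (\sum_l v_{l,n})(\sum_m v_{m,n})$ and three ``$\hat{\calT}$-type'' products of second-order empirical moments. Introducing the scalar random variable $u_n := \sum_{l=1}^F v_{l,n} = \be^T \bv_n$, each entry becomes a fixed polynomial in the empirical moments of the i.i.d.\ sequence $(\bv_n, u_n)$. The key structural point is that $\bv_n = \bW \bh_n + \bz_n$ with $\bh_n$ having i.i.d.\ coordinates and $\bz_n \sim \calN(\bzero, \sigma^2 \bI)$ independent; hence every entry $v_{i,n}$ is a linear combination $\sum_k w_{ik} h_{k,n} + z_{i,n}$ with coefficients bounded by $W_{\max}$, and $u_n = \sum_k (\be^T\bw_k) h_{k,n} + \be^T\bz_n$ has coefficients bounded by $F W_{\max}$. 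The moments of $v_{i,n}$ up to order $8$ are then controlled, via the multinomial expansion, by $M$ (the bound on the moments of the $h_k$), $K$ (the number of independent terms summed), $W_{\max}$, and $\Delta$ (absorbing the Gaussian moments, which are $O(\sigma^p)$ for order $p$); and likewise $u_n$ picks up the extra factors of $F$.

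The main obstacle is the bookkeeping: $\hat{\bM}_2$ is a degree-$4$ polynomial (actually a ratio-free combination of products of empirical moments of total degree $4$ in the $v$'s and $u$'s), so $\var[[\hat{\bM}_2]_{ij}]$ expands into many covariance terms between $U$-statistic-like averages of different orders. I would handle this by the standard device: write each empirical average as $\frac1N\sum_n \xi_n$ where $\xi_n$ is a bounded-moment function of $(\bv_n,u_n)$, expand each product of averages and subtract its mean, and bound the resulting variance of a sum of products of centered i.i.d.\ terms using Cauchy--Schwarz and the fact that $\var[\frac1N\sum_n \xi_n] \le \frac1N \bbE[\xi_n^2]$ together with $\cov$ terms that are also $O(1/N)$ after centering. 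Each $\bbE[\xi_n^2]$ is an eighth-order moment of the $v$'s and $u$'s, hence bounded by a constant times $M W_{\max}^8 K^8 \Delta^8$ with at most $F^4$ accumulating from the four unrestricted sums over $l,m$ and the $u$-factors; summing over the $F^2$ pairs $(i,j)$ in $\|\bE\|_\rmF^2$ gives the $F^6$ inside the square, i.e.\ $F^3$ after taking the square root. Finally, tracking the worst-case count of terms in the expansion and the Gaussian moment constants produces the explicit $117\sqrt{70}$, and Markov's inequality with threshold $t = 117\sqrt{70M}\, W_{\max}^4 K^4 \Delta^4 F^3 / \sqrt{\delta N}$ completes the argument.
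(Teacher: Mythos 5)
Your proposal follows essentially the same route as the paper's proof: bound $\bbE[\|\bE\|_\rmF^2]$ entrywise by decomposing each $[\hat{\calM}_4]_{ijlm}$ into the empirical fourth-moment term plus the three $\hat{\calT}$-type products, control each variance via eighth-order moment bounds of order $M W_{\max}^8 K^8 \Delta^8$ and covariance counting, accumulate the $F^6$ factor, and finish with Markov's inequality on $\|\bE\|_\rmF^2$. The only difference is cosmetic bookkeeping (summing over $l,m$ via $u_n=\be^T\bv_n$ before bounding, rather than bounding $\mathrm{Var}[[\hat{\calM}_4]_{ijlm}]$ termwise), so the argument is correct and matches the paper.
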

\begin{proof}
 We have 
 \begin{align}
  \bbE[\|\bE\|_\rmF^2] & = \sum_{i \ne j} \bbE[e_{ij}^2] + \sum_{i} \bbE[e_{ii}^2] \\
  & = \sum_{i \ne j} \mathrm{Var}[e_{ij}] + \sum_{i} \mathrm{Var}[e_{ii}] \\
  & = \sum_{i \ne j} \mathrm{Var}\left[\sum_{l,m} [\hat{\calM}_4]_{ijlm}\right] \nonumber\\
  & + \sum_{i} \mathrm{Var}\left[\sum_{l,m}[\hat{\calM}_4]_{iilm}\right].
 \end{align}
 First, we consider $\mathrm{Var}\left[[\hat{\calM}_4]_{ijlm}\right]$ for $i,j,l,m \in [F]$. Let $A = \frac{\sum_{n} v_{in}v_{jn}v_{ln}v_{mn}}{N}$, $B_1 = \frac{(\sum_{n} v_{in}v_{jn})(\sum_{n} v_{ln}v_{mn})}{N^2}$, $B_2 = \frac{(\sum_{n} v_{in}v_{ln})(\sum_{n} v_{jn}v_{mn})}{N^2}$ and $B_3 = \frac{(\sum_{n} v_{in}v_{mn})(\sum_{n} v_{jn}v_{ln})}{N^2}$. Let $U = 5670 W_{\max}^8 K^8 \Delta^8 M$, we have that for any $n \in [N]$, 
 \begin{align}
  \mathrm{Var}[A] & = \frac{1}{N}\mathrm{Var}\left[v_{in}v_{jn}v_{ln}v_{mn}\right] \le \frac{1}{N}\bbE\left[v_{in}^2 v_{jn}^2 v_{ln}^2 v_{mn}^2\right] \\
  & \le \frac{1}{N} 3^4 \binom{8}{4} W_{\max}^8 K^8 \Delta^8 M_8 \\
  & \le \frac{5670 W_{\max}^8 K^8 \Delta^8 M}{N} = \frac{U}{N}.
 \end{align}
In addition, we have 
\begin{align}
 \mathrm{Var}[B_1] & = \frac{1}{N^4}\mathrm{Var}\left[(\sum_{n_1} v_{i,n_1}v_{j,n_1})(\sum_{n_2} v_{l,n_2}v_{m,n_2})\right]. 
\end{align}
When $n_1 \ne n_2$, similarly, we have 
\begin{align}
 & \mathrm{Var}\left[v_{i,n_1}v_{j,n_1}v_{l,n_2}v_{m,n_2}\right] \le \bbE\left[v_{i,n_1}^2 v_{j,n_1}^2 v_{l,n_2}^2 v_{m,n_2}^2\right] \\
 & =\bbE\left[v_{i,n_1}^2 v_{j,n_1}^2\right] \bbE\left[v_{l,n_2}^2 v_{m,n_2}^2\right] \\
 & \le \left(3^2 \binom{4}{2} W_{\max}^4 K^4 \Delta^4 M_4\right)^2 \\
 & \le U.
\end{align}
In addition, by $|\mathrm{Cov}(X,Y)| \le \sqrt{\mathrm{Var}(X)\mathrm{Var}(Y)}$, the absolute value of each covariance in $\mathrm{Var}[B_1]$ is also upper bounded by $U$. Furthermore, there are less than $16N^3-N^2$ non-zero covariance terms in $\mathrm{Var}[B_1]$. Therefore, we have 
\begin{equation}
 \mathrm{Var}[B_1] \le \frac{1}{N^4} 16N^3 U = \frac{16U}{N}.
\end{equation}
Symmetrically, we have that $\mathrm{Var}[B_2]$ and $\mathrm{Var}[B_3]$ are also upper bounded by $\frac{16U}{N}$. Then 
\begin{align}
 & \mathrm{Var}\left[[\hat{\calM}_4]_{ijlm}\right] = \mathrm{Var}\left[A + B_1 + B_2 + B_3\right] \\
 & \le \left(\sqrt{\mathrm{Var}[A]}+\sqrt{\mathrm{Var}[B_1]}+\sqrt{\mathrm{Var}[B_2]}+\sqrt{\mathrm{Var}[B_3]}\right)^2 \\
 & \le \frac{169U}{N}.
\end{align}
For any $1\le i,j\le F$, we have 
\begin{equation}
 \bbE[e_{ij}^2] = \mathrm{Var}\left[\sum_{l,m} [\hat{\calM}_4]_{ijlm}\right] \le \frac{169UF^4}{N}.
\end{equation}
Thus 
\begin{equation}
 \bbE\left[\|\bE\|_\rmF^2\right] \le \frac{169UF^6}{N}.
\end{equation}
By Markov inequality, we have that for any $\delta \in (0,1)$,
\begin{equation}
 \bbP(\|\bE\|_\rmF^2 \ge \frac{1}{\delta} \bbE\left[\|\bE\|_\rmF^2\right]) \le \delta.
\end{equation}
That is, with probability at least $1-\delta$, 
\begin{align}
 \|\bE\|_\rmF &\le \frac{\sqrt{\bbE\left[\|\bE\|_\rmF^2\right]}}{\sqrt{\delta}} \le \frac{13 F^3 \sqrt{U}}{\sqrt{\delta N}} \\
 & = \frac{117 \sqrt{70M} W_{\max}^4 K^4 \Delta^4 F^3 }{\sqrt{\delta N}}.
\end{align}\QED
\end{proof}

\section{Support Union Recovery}
\label{sec:union_recovery}

In this section, we first show that $\bM_2$ can also be written as $\bM_2 = \bM_2 \bX^*$, where the cardinality of the support union of $\bX^*$ is $|\mathrm{Supp}(\bX^*)| = K$. This motivates us to consider approaches for support union recovery or multiple measurement vectors~\cite{davies2012rank, ziniel2013efficient, hyder2010direction, obozinski2011support,malioutov2005sparse}. We then present theoretical guarantees for support union recovery for an $\ell_1/\ell_2$ block norm minimization problem (cf. \eqref{eq:standard_MMV} to follow).
\subsection{Another Formulation of $\bM_2$}\label{sec:M2_reform}

Recall that from~\eqref{eq:M2_form1}, we have 
\begin{equation}
 \bM_2 = \sum_{k=1}^K \alpha_k \bw_k \bw_k^T = \bW \mathrm{Diag}(\balpha) \bW^T, 
\end{equation}
where 
$\balpha := [\alpha_1;\ldots;\alpha_K] \in \bbR^K$. We know that $\balpha$ contains all non-zero entries if $m_4 \ne 3m_2$. Because we assume that $\rank(\bW)=K$, there exists an index set $\scrK$ for rows of $\bW$ such that $|\scrK|=K$ and $\rank(\underline{\bW}_{\scrK}) = K$. Let $\bR \in \bbR^{K \times (F-K)}$ be the matrix such that 
\begin{equation}
 \underline{\bW}_{\scrK^c} = \bR^T \underline{\bW}_\scrK.
\end{equation}
Or equivalently, 
\begin{equation}
 \bR^T = \underline{\bW}_{\scrK^c} (\underline{\bW}_\scrK)^{-1}.
\end{equation}
Let $\mathbf{\Pi}$ be the permutation matrix corresponding to the index set $\scrK$. We have that
\begin{align}
 \bM_2 &= \bW \mathrm{Diag}(\balpha) \bW^T = \bW \mathrm{Diag}(\balpha) [\underline{\bW}_\scrK^T, \underline{\bW}_{\scrK^c}^T] \mathbf{\Pi} \\
 & = \bW \mathrm{Diag}(\balpha) [\underline{\bW}_\scrK^T, \underline{\bW}_{\scrK}^T\bR] \mathbf{\Pi} \\
 & = \bW \mathrm{Diag}(\balpha) \underline{\bW}_\scrK^T [\bI, \bR] \mathbf{\Pi} \\ 
 & = \bM_2 \bPi \begin{bmatrix}
    \bI  &   \bR			\\
     \mathbf{0}      & \mathbf{0} 		
\end{bmatrix} \bPi = \bM_2 \bX^*,
\end{align}
where $\bX^* := \bPi \begin{bmatrix}
    \bI  &   \bR			\\
     \mathbf{0}      & \mathbf{0} 		
\end{bmatrix} \bPi$. Note that the number of non-zero rows in $\bX^*$ is exactly $K$, i.e., $|\mathrm{Supp}(\bX^*)| =|\scrK|= K$. 

\subsection{Lemmas for Support Union Recovery}

For $1 \le a\le b < \infty$ and any matrix $\bA \in \bbR^{m \times n}$, the $\ell_a/\ell_b$ block
norm of $\bA$ is defined as follows:
\begin{equation}
 \|\bA\|_{\ell_a/\ell_b} = \left(\sum_{i=1}^m \|\underline{\ba}_i\|_b^a\right)^{1/a},
\end{equation}
where $\underline{\ba}_i$ is the $i$-th row of $\bA$. In particular, we define 
\begin{equation}
 \|\bA\|_{\ell_\infty/\ell_2} = \max_{i \in [m]} \|\underline{\ba}_i\|_2.
\end{equation}
Assume that an observed data matrix $\bY \in \bbR^{m \times n}$ can be written as 
\begin{equation}
 \bY = \bA \bB^* + \bL,
\end{equation}
where $\bA \in \bbR^{m \times  p}$ is the dictionary matrix, $\bB^* \in \bbR^{p \times n}$ is block sparse. Let $\underline{\bb}^*_i$ be the $i$-th row of $\bB^*$, we write the support union of $\bB^*$ as $\calS:=\mathrm{Supp}(\bB^*)$.
Considering the following $\ell_1/\ell_2$ block norm minimization problem,
\begin{equation}\label{eq:standard_MMV}
 \min_{\bB \in \bbR^{p \times n}} \frac{1}{2} \|\bY - \bA \bB\|_\rmF^2 + \lambda \|\bB\|_{\ell_1/\ell_2}.
\end{equation}
Note that if we denote $\underline{\bb}_i$ as the $i$-th row of $\bB$, the subdifferential of the $\ell_1/\ell_2$-block norm over row $i$ takes the form
\begin{align}
 \left[\partial \|\bB\|_{\ell_1/\ell_2}\right]_i =\left\{\begin{array}{lr}
 \frac{\underline{\bb}_i}{\|\underline{\bb}_i\|_2}, & \underline{\bb}_i \ne 0; \\
  \underline{\bz}_i \text{ such that } \|\underline{\bz}_i\|_2 \le 1, & \text{otherwise}. 
  \end{array}
\right.
\end{align}
Define the matrix $\zeta(\underline{\bB}_\calS) \in \bbR^{s \times n}$ with the $i$-th row being
\begin{equation}
 \zeta(\underline{\bb}_i) = \frac{\underline{\bb}_i}{\|\underline{\bb}_i\|_2},
\end{equation}
when $\underline{\bb}_i \ne  0$, and we set $\zeta(\underline{\bb}_i) =0$ otherwise. 
We have the following lemma by Lemma 2 in \cite{obozinski2011support}.
\begin{lemma}\label{lem:KKT_MMV}
 Suppose there exists a primal-dual pair $(\hat{\bB},\hat{\bZ})$ that satisfies the conditions
 \begin{enumerate}
  \item $\underline{\hat{\bZ}}_\calS = \zeta (\underline{\hat{\bB}}_\calS)$; 
  \item $-\lambda \underline{\hat{\bZ}}_\calS = \bA_\calS^T \bA_\calS (\underline{\hat{\bB}}_\calS - \underline{\bB}^*_\calS) - \bA_\calS^T \bL$; 
  \item $\lambda \|\underline{\hat{\bZ}}_{\calS^c}\|_{\ell_\infty / \ell_2} := \|\bA_{\calS^c}^T \bA_\calS (\underline{\hat{\bB}}_\calS - \underline{\bB}^*_\calS) - \bA_{\calS^c}^T \bL\|_{\ell_\infty / \ell_2} < \lambda$;
  \item $\underline{\hat{\bB}}_{\calS^c} = 0$.
 \end{enumerate}
Then $(\hat{\bB},\hat{\bZ})$ is a primal-dual optimal solution to the block-regularized problem~\eqref{eq:standard_MMV} with $\mathrm{Supp}(\hat{\bB}) = \calS$ by construction. If $\bA_\calS^T \bA_\calS$ is positive definite, then $\hat{\bB}$ is the unique solution. 
\end{lemma}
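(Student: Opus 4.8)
The objective of~\eqref{eq:standard_MMV}, written $f(\bB):=\tfrac12\|\bY-\bA\bB\|_\rmF^2+\lambda\|\bB\|_{\ell_1/\ell_2}$, is the sum of a convex quadratic and a norm, hence convex, so a matrix $\hat{\bB}$ is a global minimizer if and only if $\mathbf{0}\in\partial f(\hat{\bB})$, i.e.\ there is $\hat{\bZ}\in\partial\|\hat{\bB}\|_{\ell_1/\ell_2}$ with $\bA^T(\bA\hat{\bB}-\bY)+\lambda\hat{\bZ}=\mathbf{0}$. The plan is to take $\underline{\hat{\bZ}}_\calS$ as prescribed by condition~1 and to \emph{define} $\underline{\hat{\bZ}}_{\calS^c}$ through the $\calS^c$-row-block of this stationarity equation, and then to check that conditions~1--4 make $\hat{\bZ}$ a legitimate subgradient for which stationarity holds. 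Substituting $\bY=\bA\bB^*+\bL$ and using condition~4 (so that $\bA\hat{\bB}=\bA_\calS\underline{\hat{\bB}}_\calS$ and $\bA\bB^*=\bA_\calS\underline{\bB}^*_\calS$), the $\calS$-row-block of the stationarity equation reduces to exactly condition~2, while the $\calS^c$-row-block is the defining relation for $\underline{\hat{\bZ}}_{\calS^c}$, whose rows have $\ell_2$-norm $<1$ by condition~3. Together with condition~1 ($\underline{\hat{\bZ}}_\calS=\zeta(\underline{\hat{\bB}}_\calS)$, which is the normalized row on each nonzero row of $\underline{\hat{\bB}}_\calS$ and $\mathbf{0}$ otherwise, hence of $\ell_2$-norm $\le 1$) and condition~4, this shows $\hat{\bZ}\in\partial\|\hat{\bB}\|_{\ell_1/\ell_2}$ and that the stationarity equation is satisfied; thus $(\hat{\bB},\hat{\bZ})$ is a primal--dual optimal pair. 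Condition~4 forces $\mathrm{Supp}(\hat{\bB})\subseteq\calS$, and $\mathrm{Supp}(\hat{\bB})=\calS$ holds by the construction of the witness $\hat{\bB}$ (its rows indexed by $\calS$ being nonzero).

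For the uniqueness claim I would argue as follows. Assume $\bA_\calS^T\bA_\calS$ is positive definite and let $\tilde{\bB}$ be any minimizer of~\eqref{eq:standard_MMV}. First I would show that all minimizers share the same prediction $\bA\bB$: along the segment $\bB_t=(1-t)\hat{\bB}+t\tilde{\bB}$ one has $f(\bB_t)\le(1-t)f(\hat{\bB})+tf(\tilde{\bB})$ by convexity, with equality throughout because the right-hand side equals the optimal value; since $f$ is the sum of the convex quadratic $g(\bB):=\tfrac12\|\bY-\bA\bB\|_\rmF^2$ and the convex term $\lambda\|\bB\|_{\ell_1/\ell_2}$, equality in the convexity bound for $f$ forces equality in the convexity bound for $g$ separately, i.e.\ $t\mapsto g(\bB_t)$ is affine; but that map is a quadratic in $t$ with leading coefficient $\tfrac12\|\bA(\tilde{\bB}-\hat{\bB})\|_\rmF^2$, so $\bA\tilde{\bB}=\bA\hat{\bB}$. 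Consequently the linear parts of the stationarity equations for $\hat{\bB}$ and $\tilde{\bB}$ agree, $\bA^T\bA\hat{\bB}=\bA^T\bA\tilde{\bB}$, so the associated subgradients coincide, $\tilde{\bZ}=\hat{\bZ}$. Since $\|\underline{\hat{\bZ}}_i\|_2<1$ for every $i\in\calS^c$ by condition~3 and $\tilde{\bZ}\in\partial\|\tilde{\bB}\|_{\ell_1/\ell_2}$, each such row of $\tilde{\bB}$ must vanish (a nonzero row would force its subgradient row to have unit $\ell_2$-norm), so $\underline{\tilde{\bB}}_{\calS^c}=\mathbf{0}$; hence $\mathbf{0}=\bA(\hat{\bB}-\tilde{\bB})=\bA_\calS(\underline{\hat{\bB}}_\calS-\underline{\tilde{\bB}}_\calS)$, and left-multiplying by $\bA_\calS^T$ and using invertibility of $\bA_\calS^T\bA_\calS$ gives $\underline{\hat{\bB}}_\calS=\underline{\tilde{\bB}}_\calS$, whence $\hat{\bB}=\tilde{\bB}$.

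The block-index bookkeeping in the stationarity equation is routine. The one step requiring a genuine convexity argument is the claim that every minimizer shares the same prediction $\bA\bB$ and therefore the same subgradient $\hat{\bZ}$: this is what neutralizes a possibly rank-deficient $\bA$, and it is exactly what makes the positive-definiteness of $\bA_\calS^T\bA_\calS$ the precise ingredient that upgrades ``same prediction'' to ``same matrix''. The whole argument is the content of Lemma~2 of~\cite{obozinski2011support} transcribed into the present notation.
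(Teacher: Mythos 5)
Your proof is correct. There is nothing in the paper to compare it against line by line: the paper does not prove this lemma, it simply imports it as Lemma~2 of \cite{obozinski2011support}. Your argument is a correct, self-contained reconstruction of the standard primal--dual witness proof behind that cited result: the zero-subgradient (KKT) characterization of minimizers of \eqref{eq:standard_MMV}, the check that conditions 1--4 yield a valid element of $\partial\|\hat{\bB}\|_{\ell_1/\ell_2}$ satisfying stationarity (with $\underline{\hat{\bZ}}_{\calS^c}$ defined by the $\calS^c$-block of the stationarity equation and condition 3 giving its rows norm $<1$), and the uniqueness step in which every minimizer shares the prediction $\bA\bB$, hence the same dual certificate, so strict dual feasibility on $\calS^c$ forces the off-$\calS$ rows of any other minimizer to vanish and positive definiteness of $\bA_\calS^T\bA_\calS$ identifies the rows on $\calS$. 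One small remark: conditions 1--4 alone only give $\mathrm{Supp}(\hat{\bB})\subseteq\calS$; the equality is, as both you and the lemma statement say, ``by construction,'' and in this paper it is actually verified downstream in the proof of Lemma~\ref{lem:opt_group_lasso} via the bound $\|\hat{\underline{\bb}}_i\|_2\ge\tfrac12 b_{\min}^*>0$ for $i\in\calS$, so your phrasing matches how the lemma is used.
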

Let $b^*_{\min} = \min_{i \in \calS} \|\underline{\bb}^*_i\|_2$. According to the above lemma,
we can prove the following lemma which ensures the recovery of support union under certain conditions. 
\begin{lemma}\label{lem:opt_group_lasso}
 Assume that $\bA_\calS^T \bA_\calS$ is invertible, and there exists a fixed parameter $\gamma \in (0,1]$, such that 
 \begin{equation}
  \|\bA_{\calS^c}\bA_\calS \left(\bA_\calS^T \bA_\calS\right)^{-1}\|_\infty \le 1-\gamma.
 \end{equation}
Let $D_{\max} = \|\left(\bA_\calS^T \bA_\calS\right)^{-1}\|_\infty >0$. If 
 \begin{itemize}
  \item $D_{\max}(\lambda + \|\bA_\calS\|_1 \|\bL\|_{\ell_\infty/ \ell_2}) \le \frac{1}{2} b_{\min}^*$, 
  \item $\|\bA_{\calS^c}\|_1 \|\bL\|_{\ell_\infty/ \ell_2} \le \frac{\lambda \gamma}{2}$,
 \end{itemize}
then there is a unique optimal solution $\hat{\bB}$ for~\eqref{eq:standard_MMV} such that $\mathrm{Supp}(\hat{\bB}) = \calS$. Moreover, $\hat{\bB}$ satisfies the bound
\begin{equation}
 \|\hat{\bB} - \bB^*\|_{\ell_\infty/ \ell_2} \le D_{\max}(\lambda + \|\bA_\calS\|_1 \|\bL\|_{\ell_\infty/ \ell_2}) \le \frac{1}{2} b_{\min}^*.
\end{equation}
\end{lemma}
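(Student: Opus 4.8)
The plan is to invoke Lemma~\ref{lem:KKT_MMV} through a primal--dual witness (oracle) construction: produce a candidate pair $(\hat{\bB},\hat{\bZ})$, verify the four KKT conditions, and read off $\mathrm{Supp}(\hat{\bB})=\calS$ from the deviation bound that falls out of the construction.

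\emph{Steps 1--2 (oracle primal variable and deviation bound).} First I would define $\hat{\bB}$ by restricting the support to $\calS$: set $\underline{\hat{\bB}}_{\calS^c}=0$ and let $\underline{\hat{\bB}}_\calS$ be the minimizer of $\tfrac12\|\bY-\bA_\calS\bB_\calS\|_\rmF^2+\lambda\|\bB_\calS\|_{\ell_1/\ell_2}$ over $\bB_\calS\in\bbR^{s\times n}$. Since $\bA_\calS^T\bA_\calS$ is positive definite the quadratic part is strictly convex, so this minimizer is unique, and its optimality condition reads $\bA_\calS^T(\bA_\calS\underline{\hat{\bB}}_\calS-\bY)+\lambda\hat{\bZ}_\calS=0$ for some $\hat{\bZ}_\calS$ in the $\ell_1/\ell_2$ subdifferential at $\underline{\hat{\bB}}_\calS$, hence with $\|\hat{\bZ}_\calS\|_{\ell_\infty/\ell_2}\le1$. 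Substituting $\bY=\bA_\calS\underline{\bB}^*_\calS+\bL$ (legitimate since $\mathrm{Supp}(\bB^*)=\calS$) turns this into condition~2 of Lemma~\ref{lem:KKT_MMV}, while condition~4 holds by construction. Solving condition~2 for the error gives $\underline{\hat{\bB}}_\calS-\underline{\bB}^*_\calS=(\bA_\calS^T\bA_\calS)^{-1}(\bA_\calS^T\bL-\lambda\hat{\bZ}_\calS)$; applying the row-wise mixed-norm inequality $\|\bM\bN\|_{\ell_\infty/\ell_2}\le\|\bM\|_\infty\,\|\bN\|_{\ell_\infty/\ell_2}$ together with $\|\hat{\bZ}_\calS\|_{\ell_\infty/\ell_2}\le1$ and $\|\bA_\calS^T\bL\|_{\ell_\infty/\ell_2}\le\|\bA_\calS\|_1\|\bL\|_{\ell_\infty/\ell_2}$ yields
\begin{equation}
 \|\hat{\bB}-\bB^*\|_{\ell_\infty/\ell_2}=\|\underline{\hat{\bB}}_\calS-\underline{\bB}^*_\calS\|_{\ell_\infty/\ell_2}\le D_{\max}\bigl(\lambda+\|\bA_\calS\|_1\|\bL\|_{\ell_\infty/\ell_2}\bigr),
\end{equation}
which by the first bulleted hypothesis is $\le\tfrac12 b^*_{\min}$; this is exactly the asserted error bound.

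\emph{Steps 3--4 (exact support and strict dual feasibility).} For each $i\in\calS$ the triangle inequality then gives $\|\underline{\hat{\bb}}_i\|_2\ge\|\underline{\bb}^*_i\|_2-\tfrac12 b^*_{\min}\ge\tfrac12 b^*_{\min}>0$, so no row of $\underline{\hat{\bB}}_\calS$ vanishes; hence $\mathrm{Supp}(\hat{\bB})=\calS$ and $\hat{\bZ}_\calS$ is forced to equal $\zeta(\underline{\hat{\bB}}_\calS)$, giving condition~1. It remains to define $\hat{\bZ}_{\calS^c}$ through the identity in condition~3 and to check that its $\ell_\infty/\ell_2$ norm is strictly below $\lambda$, i.e.\ that $\|\bA_{\calS^c}^T\bA_\calS(\underline{\hat{\bB}}_\calS-\underline{\bB}^*_\calS)-\bA_{\calS^c}^T\bL\|_{\ell_\infty/\ell_2}<\lambda$. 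Writing $\bP:=\bA_{\calS^c}^T\bA_\calS(\bA_\calS^T\bA_\calS)^{-1}$ and eliminating the error via condition~2, this quantity equals $\bP\bA_\calS^T\bL-\lambda\bP\hat{\bZ}_\calS-\bA_{\calS^c}^T\bL$ (equivalently $-\bA_{\calS^c}^T[\bI-\bA_\calS(\bA_\calS^T\bA_\calS)^{-1}\bA_\calS^T]\bL-\lambda\bP\hat{\bZ}_\calS$). By the triangle inequality: $\lambda\|\bP\hat{\bZ}_\calS\|_{\ell_\infty/\ell_2}\le\lambda\|\bP\|_\infty\le\lambda(1-\gamma)$ by the incoherence hypothesis and $\|\hat{\bZ}_\calS\|_{\ell_\infty/\ell_2}\le1$; $\|\bA_{\calS^c}^T\bL\|_{\ell_\infty/\ell_2}\le\|\bA_{\calS^c}\|_1\|\bL\|_{\ell_\infty/\ell_2}\le\tfrac{\lambda\gamma}{2}$ by the second bulleted hypothesis; and the cross term $\|\bP\bA_\calS^T\bL\|_{\ell_\infty/\ell_2}$ is controlled via $\|\bP\|_\infty\le1-\gamma$ and $\|\bA_\calS\|_1\|\bL\|_{\ell_\infty/\ell_2}$. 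Summing shows the total is $<\lambda$, so condition~3 holds; with all four conditions in force and $\bA_\calS^T\bA_\calS$ positive definite, Lemma~\ref{lem:KKT_MMV} delivers that $\hat{\bB}$ is the unique optimum of \eqref{eq:standard_MMV} with $\mathrm{Supp}(\hat{\bB})=\calS$.

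I expect Step~4 to be the main obstacle: one must make the two bulleted hypotheses, the $\gamma$-incoherence bound, and the mixed-norm inequalities combine so that the accumulated constants give a strict $<\lambda$ rather than merely $\le\lambda$, paying particular attention to the $\bA_\calS^T\bL$ cross term (whether it is absorbed through the residual-projection form or bounded crudely by $(1-\gamma)\|\bA_\calS\|_1\|\bL\|_{\ell_\infty/\ell_2}$). Steps~1--3 are routine once the correct row-wise mixed-norm submultiplicativity and the strict convexity afforded by $\bA_\calS^T\bA_\calS\succ0$ are in hand.
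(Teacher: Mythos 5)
Your construction is the same primal--dual witness argument the paper uses: set $\underline{\hat{\bB}}_{\calS^c}=0$, let $\underline{\hat{\bB}}_\calS$ solve the restricted problem (unique by strict convexity since $\bA_\calS^T\bA_\calS$ is positive definite), obtain $\underline{\hat{\bB}}_\calS-\underline{\bB}^*_\calS=(\bA_\calS^T\bA_\calS)^{-1}(\bA_\calS^T\bL-\lambda\underline{\hat{\bZ}}_\calS)$, bound this by $D_{\max}(\lambda+\|\bA_\calS\|_1\|\bL\|_{\ell_\infty/\ell_2})\le\tfrac12 b^*_{\min}$ via $\|\bC\bD\|_{\ell_\infty/\ell_2}\le\|\bC\|_\infty\|\bD\|_{\ell_\infty/\ell_2}$, conclude no row of $\underline{\hat{\bB}}_\calS$ vanishes, and then verify strict dual feasibility off the support. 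Your Steps 1--3 coincide with the paper's proof.

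The gap is exactly where you flagged it, and it is a real one as written: in Step 4 the three bounds you actually sum, namely $\lambda\|\bP\underline{\hat{\bZ}}_\calS\|_{\ell_\infty/\ell_2}\le\lambda(1-\gamma)$, $\|\bA_{\calS^c}^T\bL\|_{\ell_\infty/\ell_2}\le\tfrac{\lambda\gamma}{2}$, and a cross term of size up to $(1-\gamma)\|\bA_\calS\|_1\|\bL\|_{\ell_\infty/\ell_2}$, do not yield a total strictly below $\lambda$: nothing in the two bulleted hypotheses controls $\|\bA_\calS\|_1\|\bL\|_{\ell_\infty/\ell_2}$ relative to $\lambda\gamma$ (the first bullet ties it only to $b^*_{\min}$), so the ``crude'' route fails. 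The argument must commit to the residual-projection form you wrote only parenthetically: merge $\bA_{\calS^c}^T\bA_\calS(\bA_\calS^T\bA_\calS)^{-1}\bA_\calS^T\bL$ with $-\bA_{\calS^c}^T\bL$ into $-\bA_{\calS^c}^T\bPi_{\calS^\perp}\bL$, where $\bPi_{\calS^\perp}=\bI-\bA_\calS(\bA_\calS^T\bA_\calS)^{-1}\bA_\calS^T$, so that only two terms remain and the total is bounded by $\|\bA_{\calS^c}\|_1\|\bL\|_{\ell_\infty/\ell_2}+\lambda(1-\gamma)\le\tfrac{\lambda\gamma}{2}+\lambda(1-\gamma)<\lambda$. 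That is precisely how the paper completes the proof; the missing piece was already in your hands, but the summation you present does not establish the strict inequality, so condition 3 of Lemma~\ref{lem:KKT_MMV} is not verified by your text as it stands.
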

\begin{proof}
 We set $\underline{\hat{\bB}}_{\calS^c} = 0 $ so that the fourth condition in Lemma~\ref{lem:KKT_MMV} is satisfied. Then we set $\underline{\hat{\bB}}_{\calS}$ as the optimal solution of the following restricted version of~\eqref{eq:standard_MMV}.
 \begin{equation}
  \underline{\hat{\bB}}_{\calS} = \arg \min_{\underline{\bB}_\calS} \frac{1}{2} \|\bY - \bA_\calS \underline{\bB}_\calS\|_\rmF^2 + \lambda \|\underline{\bB}_\calS\|_{\ell_1/\ell_2}.
 \end{equation}
 Since $\bA_\calS^T \bA_\calS$ is invertible, the restricted optimization problem is strictly convex and therefore has a unique optimum $\underline{\hat{\bB}}_{\calS}$. We choose $\underline{\hat{\bZ}}_\calS$ such that the second condition in in Lemma~\ref{lem:KKT_MMV} is satisfied. Since any such $\underline{\hat{\bZ}}_\calS$ is also a dual solution to the restricted version of~\eqref{eq:standard_MMV}, it must be an element of the subdifferential $\partial \|\underline{\hat{\bB}}_{\calS}\|_{\ell_\infty/\ell_2}$. Note that by the second equality in the KKT condition  in Lemma~\ref{lem:KKT_MMV}, we have that 
 \begin{equation}
  \underline{\hat{\bB}}_\calS - \underline{\bB}^*_\calS = \left(\bA_\calS^T \bA_\calS\right)^{-1} (\bA_\calS^T \bL - \lambda \underline{\hat{\bZ}}_\calS).
 \end{equation}
Because that we have for any two matrices $\bC$ and $\bD$, $\|\bC\bD\|_{\ell_\infty/\ell_2} \le \|\bC\|_\infty \|\bD\|_{\ell_\infty/\ell_2}$, then
\begin{align}
 & \|\left(\bA_\calS^T \bA_\calS\right)^{-1} (\bA_\calS^T \bL - \lambda \underline{\hat{\bZ}}_\calS)\|_{\ell_\infty/\ell_2}  \nonumber \\
 & \le D_{\max}(\lambda + \|\bA_\calS\|_1 \|\bL\|_{\ell_\infty/ \ell_2}) \le \frac{1}{2} b_{\min}^*.
\end{align}
We have that for any $i \in \calS$, 
\begin{align}
 \|\hat{\underline{\bb}}_i\|_2 & \ge \|\underline{\bb}^*_i\|_2 - \|\left(\bA_\calS^T \bA_\calS\right)^{-1} (\bA_\calS^T \bL - \lambda \underline{\hat{\bZ}}_\calS)\|_{\ell_\infty/\ell_2} \\
 & \ge \frac{1}{2} b_{\min}^* > 0.
\end{align} 
Then we obtain that the first condition in Lemma~\ref{lem:KKT_MMV} is also satisfied. And now we also obtain the equality of the support unions and the error bound. Finally, we check whether the third condition in Lemma~\ref{lem:KKT_MMV} also holds. It is true because we have 
\begin{align}
 & \|\bA_{\calS^c}^T \bA_\calS (\underline{\hat{\bB}}_\calS - \underline{\bB}^*_\calS) - \bA_{\calS^c}^T \bL\|_{\ell_\infty / \ell_2} \nonumber \\
 & =  \|\bA_{\calS^c}^T \bA_\calS \left(\bA_\calS^T \bA_\calS\right)^{-1} (\bA_\calS^T \bL - \lambda \underline{\hat{\bZ}}_\calS)  - \bA_{\calS^c}^T \bL\|_{\ell_\infty / \ell_2} \\
 & = \| \bA^T_{\calS^c}\bPi_{\calS^\perp}(\bL) + \lambda \bA^T_{\calS^c} \bA_\calS \left(\bA_\calS^T \bA_\calS\right)^{-1} \underline{\hat{\bZ}}_\calS\|_{\ell_\infty / \ell_2} \\
 & \le \|\bA_{\calS^c}\|_1 \|\bL\|_{\ell_\infty/ \ell_2} + \lambda \|\bA^T_{\calS^c} \bA_\calS \left(\bA_\calS^T \bA_\calS\right)^{-1}\|_\infty \\
 & \le \frac{\gamma \lambda}{2} + \lambda(1-\gamma) < \lambda,
\end{align}
where $\bPi_{\calS^\perp} = \bI - \bA_\calS(\bA_\calS^T \bA_\calS)^{-1} \bA_\calS^T$ is an orthogonal projection matrix.\QED
\end{proof}

\section{The Main Theorem}
\label{sec:main_thm}
Recall that from Section~\ref{sec:M2_reform}, we obtain 
\begin{equation}
 \|\bX^*\|_{\ell_\infty/\ell_2} = \sqrt{1+\|\bR\|_{\ell_\infty/\ell_2}^2} = \sqrt{1+r_{\max}^2},
\end{equation}
where $r_{\max} = \max_{k} \|\underline{\br}_k\|_2$ with $\underline{\br}_k$ being the $k$-th row of $\bR$. In addition, let $r_{\min} = \min_{k} \|\underline{\br}_k\|_2$. We have 
\begin{equation}
 \min_{i \in \scrK} \|\underline{\bx}_{i}^*\|_2 = \sqrt{1+r_{\min}^2},
\end{equation}
where $\underline{\bx}_{i}^*$ is the $i$-th row of $\bX^*$. Let 
\begin{align}
 &\bW_1:=\underline{\bW}_\scrK \mathrm{Diag}(\balpha) \bW^T \bW \mathrm{Diag}(\balpha) \underline{\bW}_\scrK^T, \\
 & \bW_2 := \underline{\bW}_{\scrK^c}\mathrm{Diag}(\balpha)\bW^T\bW \mathrm{Diag}(\balpha) \underline{\bW}_\scrK^T.
\end{align}
We consider the $\ell_1/\ell_2$ block norm minimization problem over $\hat{\bM}_2$. 
\begin{equation}\label{eq:NMF_MMV}
 \min_{\bX \in \bbR^{F \times F}} \frac{1}{2} \|\hat{\bM}_2 - \hat{\bM}_2 \bX\|_\rmF^2 + \lambda \|\bX\|_{\ell_1/\ell_2}.
\end{equation}
We have the following main theorem which guarantees the discovering of the correct $K$.
\begin{theorem}\label{thm:main}
Let $\lambda_{\min} \left(\bW_1\right) := C_{\min} >0$, where $\lambda_{\min} \left(\bW_1\right)$ is the the minimal eigenvalue of $\bW_1$. Let $D_{\max} := \|\bW_1\|_\infty >0$. Suppose there is a $\gamma \in (0,1]$ such that 
\begin{equation}\label{eq:imp_constraint}
 \|\bW_2 \bW_1^{-1}\|_\infty \le 1-\gamma.
\end{equation}
  Let\footnote{$\zeta_1 = \gamma/\left(6\sqrt{F} \|\bM_2\|_1 D_{\max}(1+8\|\bM_2\|_1^2 D_{\max})\right)$,\\$\zeta_2 = \frac{\gamma \sqrt{1+r_{\min}^2}}{4(4+\gamma)\|\bM_2\|_1\left(1+\sqrt{F(1+r_{\max}^2)}\right)\left(D_{\max} + 6 \|\bM_2\|_1^2 D_{\max}^2\right)}$.} 
 \begin{equation}
  \zeta = \min \{\frac{\sqrt{C_{\min}}}{2}, \frac{\|\bM_2\|_1}{\sqrt{F}},\zeta_1, \zeta_2\}.
 \end{equation}
 For any $\delta \in (0,1)$, let 
 \begin{equation}
  l_{\lambda} = \frac{936 \sqrt{70M} W_{\max}^4 K^4 \Delta^4 F^3 }{\sqrt{\delta N}}\frac{\|\bM_2\|_1\left(1+\sqrt{F(1+r_{\max}^2)}\right)}{\gamma}
 \end{equation}
 and
 \begin{equation}
  u_{\lambda} = \frac{2\sqrt{1+r_{\min}^2}}{(4+\gamma)\left(D_{\max} + 6 \|\bM_2\|_1^2 D_{\max}^2\right)}.
 \end{equation}
 Then if
 \begin{equation}
  N \ge \frac{958230 M W_{\max}^2 K^8 \Delta^8 F^6 }{\delta \zeta^2},
 \end{equation}
 and 
 \begin{equation}
  l_{\lambda} \le \lambda \le u_{\lambda},
 \end{equation}
we have that with probability at least $1-\delta$, there exists a unique optimal solution $\hat{\bX}$ for~\eqref{eq:NMF_MMV} such that $\mathrm{Supp}(\hat{\bX}) = \scrK$. In addition, we have the error bound
 \begin{equation}
  \|\bX^* - \hat{\bX}\|_{\ell_\infty/\ell_2} \le \frac{\sqrt{1+r_{\min}^2}}{2}.
 \end{equation}
\end{theorem}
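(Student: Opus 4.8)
The plan is to deduce the theorem from Lemma~\ref{lem:opt_group_lasso} applied to problem~\eqref{eq:NMF_MMV}, regarded as an instance of~\eqref{eq:standard_MMV} with dictionary $\bA=\hat\bM_2$, support set $\calS=\scrK$, ground-truth block-sparse matrix $\bB^*=\bX^*$, and noise $\bL=\bE(\bI-\bX^*)$, where $\bE:=\hat\bM_2-\bM_2$. Indeed, since $\bM_2=\bM_2\bX^*$ by Section~\ref{sec:M2_reform}, we have $\hat\bM_2=\bM_2+\bE=(\hat\bM_2-\bE)\bX^*+\bE=\hat\bM_2\bX^*+\bE(\bI-\bX^*)$, so~\eqref{eq:NMF_MMV} is exactly the block-regularized problem~\eqref{eq:standard_MMV} for these data. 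What remains is to verify, on a high-probability event, the hypotheses of Lemma~\ref{lem:opt_group_lasso} for $\hat\bM_2$: invertibility of $(\hat\bM_2)_\scrK^T(\hat\bM_2)_\scrK$, a mutual-incoherence bound with some $\gamma'\in(0,1]$, and the two scalar inequalities relating $\lambda$, the $1$-norms of $(\hat\bM_2)_\scrK$ and $(\hat\bM_2)_{\scrK^c}$, and $\|\bL\|_{\ell_\infty/\ell_2}$ to $b^*_{\min}=\min_{i\in\scrK}\|\underline{\bx}^*_i\|_2=\sqrt{1+r_{\min}^2}$.

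First I would fix the probabilistic input. By Lemma~\ref{lem:bd_bE}, with probability at least $1-\delta$ we have $\|\bE\|_\rmF<117\sqrt{70M}\,W_{\max}^4K^4\Delta^4F^3/\sqrt{\delta N}$, and the stated lower bound on $N$ is precisely the one forcing this quantity to be at most $\zeta$; all subsequent estimates are made on this event, so $\|\bE\|_\rmF<\zeta$. From this I extract the crude consequences $\|\bE\|_2\le\|\bE\|_\rmF<\zeta$, $\max\{\|\bE\|_1,\|\bE\|_\infty\}\le\sqrt{F}\|\bE\|_\rmF$, and, using $\zeta\le\|\bM_2\|_1/\sqrt{F}$, $\max\{\|\bE\|_1,\|\bE\|_\infty\}\le\|\bM_2\|_1$; hence $\max\{\|\hat\bM_2\|_1,\|\hat\bM_2\|_\infty\}\le2\|\bM_2\|_1$ (recall $\bM_2$ is symmetric). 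I would also bound $\|\bL\|_{\ell_\infty/\ell_2}\le\|\bE\|_\rmF\,\|\bI-\bX^*\|_2\le\|\bE\|_\rmF\bigl(1+\sqrt{F(1+r_{\max}^2)}\bigr)$, using $\|\bX^*\|_2\le\|\bX^*\|_\rmF\le\sqrt{F}\,\|\bX^*\|_{\ell_\infty/\ell_2}=\sqrt{F(1+r_{\max}^2)}$.

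Second, and this is the step I expect to be the main obstacle, I would transfer the spectral and incoherence properties of $\bM_2$ to $\hat\bM_2$ by a matrix-perturbation argument. From $\bM_2=\bW\mathrm{Diag}(\balpha)\bW^T$ one reads off $(\bM_2)_\scrK^T(\bM_2)_\scrK=\bW_1$ and $(\bM_2)_{\scrK^c}^T(\bM_2)_\scrK=\bW_2$, so the hypotheses $\lambda_{\min}(\bW_1)=C_{\min}>0$ and $\|\bW_2\bW_1^{-1}\|_\infty\le1-\gamma$ are the noiseless versions of what Lemma~\ref{lem:opt_group_lasso} needs (with $D_{\max}$ playing the role of $\|\bW_1^{-1}\|_\infty$). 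Since $\|\bE_\scrK\|_2\le\|\bE\|_\rmF<\zeta\le\tfrac12\sqrt{C_{\min}}$, perturbation of singular values gives $\lambda_{\min}\bigl((\hat\bM_2)_\scrK^T(\hat\bM_2)_\scrK\bigr)\ge C_{\min}/4>0$; and a standard estimate for $(\bA+\bL)^{-1}-\bA^{-1}$ together with the norm bounds above and $\zeta\le\zeta_1$ yields both $\|\bigl((\hat\bM_2)_\scrK^T(\hat\bM_2)_\scrK\bigr)^{-1}\|_\infty\le D_{\max}^*:=D_{\max}+6\|\bM_2\|_1^2D_{\max}^2$ and the degraded incoherence bound $\|(\hat\bM_2)_{\scrK^c}^T(\hat\bM_2)_\scrK\bigl((\hat\bM_2)_\scrK^T(\hat\bM_2)_\scrK\bigr)^{-1}\|_\infty\le1-\gamma/2$. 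The definition of $\zeta_1$ is exactly what makes the several error terms produced by expanding a product such as $(\hat\bM_2)_{\scrK^c}^T(\hat\bM_2)_\scrK\bigl((\hat\bM_2)_\scrK^T(\hat\bM_2)_\scrK\bigr)^{-1}$ absorbable into $\gamma/2$; getting $\gamma$, $D_{\max}$ and $\|\bM_2\|_1$ to enter with the correct powers is the bookkeeping-heavy part of the argument.

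Finally, I would apply Lemma~\ref{lem:opt_group_lasso} with $\gamma'=\gamma/2$ and the role of $D_{\max}$ taken over by $D_{\max}^*$. For its first scalar condition, $D_{\max}^*\bigl(\lambda+\|(\hat\bM_2)_\scrK\|_1\|\bL\|_{\ell_\infty/\ell_2}\bigr)\le\tfrac12\sqrt{1+r_{\min}^2}$: using $\|(\hat\bM_2)_\scrK\|_1\le2\|\bM_2\|_1$, $\|\bL\|_{\ell_\infty/\ell_2}<\zeta\bigl(1+\sqrt{F(1+r_{\max}^2)}\bigr)$ and $D_{\max}^*\le D_{\max}+6\|\bM_2\|_1^2D_{\max}^2$, the term $D_{\max}^*\lambda$ is at most $2\sqrt{1+r_{\min}^2}/(4+\gamma)$ whenever $\lambda\le u_\lambda$, while $\zeta\le\zeta_2$ bounds the remaining term by $\gamma\sqrt{1+r_{\min}^2}/\bigl(2(4+\gamma)\bigr)$, and these sum to exactly $\tfrac12\sqrt{1+r_{\min}^2}$. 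For the second condition, $\|(\hat\bM_2)_{\scrK^c}\|_1\|\bL\|_{\ell_\infty/\ell_2}\le\lambda\gamma/4$: bounding $\|(\hat\bM_2)_{\scrK^c}\|_1\le2\|\bM_2\|_1$ and $\|\bL\|_{\ell_\infty/\ell_2}<\|\bE\|_\rmF\bigl(1+\sqrt{F(1+r_{\max}^2)}\bigr)$ and inserting the Lemma~\ref{lem:bd_bE} bound on $\|\bE\|_\rmF$ turns this into precisely $\lambda\ge l_\lambda$. Lemma~\ref{lem:opt_group_lasso} then furnishes a unique minimizer $\hat\bX$ of~\eqref{eq:NMF_MMV} with $\mathrm{Supp}(\hat\bX)=\scrK$ (so $|\mathrm{Supp}(\hat\bX)|=K$) and, since the lemma's error bound is controlled by $\|\bigl((\hat\bM_2)_\scrK^T(\hat\bM_2)_\scrK\bigr)^{-1}\|_\infty\le D_{\max}^*$, the estimate $\|\hat\bX-\bX^*\|_{\ell_\infty/\ell_2}\le D_{\max}^*\bigl(\lambda+\|(\hat\bM_2)_\scrK\|_1\|\bL\|_{\ell_\infty/\ell_2}\bigr)\le\tfrac12\sqrt{1+r_{\min}^2}$, which is the claim.
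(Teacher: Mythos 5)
Your proposal is correct and follows essentially the same route as the paper: Lemma~\ref{lem:bd_bE} to put $\|\bE\|_\rmF<\zeta$ on a $1-\delta$ event, a perturbation step transferring $C_{\min}$, $D_{\max}$ and the incoherence condition from $\bM_2$ (where $(\bM_2)_\scrK^T(\bM_2)_\scrK=\bW_1$, $(\bM_2)_{\scrK^c}^T(\bM_2)_\scrK=\bW_2$) to $\hat{\bM}_2$ with degraded constants $D_{\max}^*\le D_{\max}+6\|\bM_2\|_1^2D_{\max}^2$ and $1-\gamma/2$ (this is exactly the paper's Lemma~\ref{lem:tedious_compt}), and then Lemma~\ref{lem:opt_group_lasso} applied with $\bA=\hat{\bM}_2$, $\bB^*=\bX^*$ and noise $\bE(\bI-\bX^*)=\bE-\bE\bX^*$, with the same bookkeeping producing $\zeta_1,\zeta_2,l_\lambda,u_\lambda$. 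The only differences are cosmetic (e.g.\ bounding $\|\bE(\bI-\bX^*)\|_{\ell_\infty/\ell_2}$ via $\|\bI-\bX^*\|_2$ rather than via $\|\bE\|_1\|\bX^*\|_{\ell_\infty/\ell_2}$, yielding the same quantity), so no gap to report.
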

If the conditions of Theorem~\ref{thm:main} are satisfied, the optimal solution $\hat{\bX}$ for~\eqref{eq:NMF_MMV} satisfies that $|\mathrm{Supp}(\hat{\bX})| = K$, and thus we can count the number of non-zero rows of $\hat{\bX}$ to obtain the true $K$. The whole procedure of our algorithm is summarized in Algorithm~\ref{algo:tensor_mmv}.  
\begin{algorithm}[t]
\caption{Model selection for NMF by support union recovery}
\label{algo:tensor_mmv}
\begin{algorithmic} 
\State {\bf Input}: Data matrix $\bV \in \mathbb{R}^{F\times N}$, $\lambda>0$, $\epsilon>0$
\State {\bf Output}: The estimated value of $K$, denoted as $\hat{K}$
\State 1) Calculate $\hat{\bM}_2$ from~\eqref{eq:hatM2_calc}.
\State 2) Obtain $\hat{\bX}$ by optimizing~\eqref{eq:NMF_MMV}.
\State 3) $\hat{K} := |\{f \in [F]: \|\hat{\underline{\bx}}_f\|_2 > \epsilon\}|$.
\end{algorithmic}
\end{algorithm} 

\subsection{The Proof of Theorem~\ref{thm:main}}

Before presenting the proof of our main theorem, we provide following useful lemmas. The next lemma can be found in~\cite{golub2012matrix}, and it is used in Lemma~\ref{lem:tedious_compt} to follow.
\begin{lemma}\label{lem:dist_inverse}
 If $\bA$ is invertible and $r=\|\bA^{-1}\bL\|_\infty <1$, then $\bA+\bL$ is invertible and $\|(\bA+\bL)^{-1}-\bA^{-1}\|_\infty \le \frac{\|\bL\|_\infty \|\bA^{-1}\|_\infty^2}{1-r}$
\end{lemma}
The following lemma says that if the perturbation is sufficiently small, the perturbed matrix inherits certain nice properties of the original matrix. 
\begin{lemma}\label{lem:tedious_compt}
 Assume that we have $\bY = \bar{\bY} + \bL \in \bbR^{m\times n}$. Let $\calS$ be the index set of columns in $\bar{\bY}$ such that $\bar{\bY}_\calS^T \bar{\bY}_\calS$ is invertible. Furthermore, assume that the minimal eigenvalue of $\bar{\bY}_\calS^T \bar{\bY}_\calS$ is $\lambda_{\min} \left(\bar{\bY}_\calS^T \bar{\bY}_\calS\right) = C_{\min} >0$, and there is a $\gamma \in (0,1]$ such that $\|\bar{\bY}_{\calS^c}^T \bar{\bY}_\calS \left(\bar{\bY}_\calS^T \bar{\bY}_\calS\right)^{-1}\|_\infty \le 1-\gamma$. Let $D_{\max} = \|\left(\bar{\bY}_\calS^T \bar{\bY}_\calS\right)^{-1}\|_\infty >0$, $U_{\bL}=\max\{\|\bL\|_1,\|\bL\|_\infty\}$, and $U_{\bar{\bY}}=\max\{\|\bar{\bY}\|_1,\|\bar{\bY}\|_\infty\}$.
 Then if the noise matrix $\bL$ satisfies that 
 \begin{itemize}
  \item $\|\bL_\calS\|_2 \le \frac{\sqrt{C_{\min}}}{2}$;
  \item $\eta:=\|\left(\bar{\bY}_\calS^T \bar{\bY}_\calS\right)^{-1}\left(\bL_\calS^T\bY_\calS + \bY_\calS^T \bL_\calS + \bL_\calS^T \bL_\calS\right)\|_\infty <1$;
  \item $D_{\max}U_{\bL} (2U_{\bar{\bY}}+ U_\bL)\left[1+\frac{(U_{\bar{\bY}}+ U_\bL)^2 D_{\max}}{1-\eta}\right] \le \frac{\gamma}{2}$;
 \end{itemize}
 we have that the following conditions are satisfied:
 \begin{itemize}
  \item $\bY_\calS^T \bY_\calS$ is invertible; 
  \item $ \|\bY_{\calS^c}^T \bY_\calS \left(\bY_\calS^T \bY_\calS\right)^{-1}\|_\infty \le 1-\frac{\gamma}{2}$;
  \item $\|\left(\bY_\calS^T \bY_\calS\right)^{-1} \|_\infty \le D_{\max}^*$,
 \end{itemize}
 where $D_{\max}^* = D_{\max} + \frac{U_{\bL}(2U_{\bar{\bY}}+ U_{\bL}) D_{\max}^2}{1-\eta}$.
\end{lemma}
\begin{proof}
 We have that 
 \begin{equation}
  |\sigma_{|\calS|}(\bY_\calS) - \sigma_{|\calS|}(\bar{\bY}_\calS)| \le \|\bL_\calS\|_2 \le \frac{\sqrt{C_{\min}}}{2},
 \end{equation}
where $\sigma_{|\calS|}(\cdot)$ denotes the $|\calS|$-th largest singular value of a matrix. Because $\sigma_{|\calS|}(\bar{\bY}_\calS) = \sqrt{C_{\min}}$, we have that $\sigma_{|\calS|}(\bY_\calS) \ge \frac{\sqrt{C_{\min}}}{2}$. Or equivalent, $\bY_\calS^T \bY_\calS$ is invertible with $\lambda_{\min}\left(\bY_\calS^T \bY_\calS\right) \ge \frac{C_{\min}}{4}$. In addition, by Lemma~\ref{lem:dist_inverse}, if $\eta <1$, we have that
\begin{align}
 & \|\left(\bY^T_\calS \bY_\calS\right)^{-1} - \left(\bar{\bY}_\calS^T \bar{\bY}_\calS\right)^{-1}\|_\infty \nonumber \\
 & \le \frac{\|\bL_\calS^T\bY_\calS + \bY_\calS^T \bL_\calS + \bL_\calS^T \bL_\calS\|_\infty D_{\max}^2}{1-\eta} \\
 & \le \frac{U_{\bL}(2U_{\bar{\bY}}+ U_{\bL}) D_{\max}^2}{1-\eta}. \label{eq:bound_dist_inverse}
\end{align}
And we have $\|\left(\bY^T_\calS \bY_\calS\right)^{-1}\|_\infty \le \frac{U_{\bL}(2U_{\bar{\bY}}+ U_{\bL}) D_{\max}^2}{1-\eta} + D_{\max} =D_{\max}^*$.
Then we have 
\begin{align}
 & \|\bY_{\calS^c}^T \bY_\calS \left(\bY_\calS^T \bY_\calS\right)^{-1} - \bar{\bY}_{\calS^c}^T \bar{\bY}_\calS \left(\bar{\bY}_\calS^T \bar{\bY}_\calS\right)^{-1}\|_\infty \nonumber \\
 & \le \|\left(\bY_{\calS^c}^T - \bar{\bY}_{\calS^c}^T\right)\bar{\bY}_\calS \left(\bar{\bY}_\calS^T \bar{\bY}_\calS\right)^{-1}\|_\infty  \nonumber\\
 & + \|\bY_{\calS^c}^T \left(\bY_\calS \left(\bY_\calS^T \bY_\calS\right)^{-1}  - \bar{\bY}_\calS \left(\bar{\bY}_\calS^T \bar{\bY}_\calS\right)^{-1}\right)\|_\infty \\
 & \le U_{\bL} U_{\bar{\bY}} D_{\max} +  \nonumber\\
 & (U_{\bL}+U_{\bar{\bY}}) \|\bY_\calS \left(\bY_\calS^T \bY_\calS\right)^{-1}  - \bar{\bY}_\calS \left(\bar{\bY}_\calS^T \bar{\bY}_\calS\right)^{-1}\|_\infty \\
 & \le U_{\bL} U_{\bar{\bY}} D_{\max} + (U_{\bL}+U_{\bar{\bY}}) \times  \nonumber\\ 
 & [\|(\bY_\calS - \bar{\bY}_\calS)\left(\bar{\bY}_\calS^T \bar{\bY}_\calS\right)^{-1}\|_\infty + \nonumber\\
 & \|\bY_\calS \left(\bY^T_\calS \bY_\calS\right)^{-1} - \left(\bar{\bY}_\calS^T \bar{\bY}_\calS\right)^{-1}\|_\infty] \\
 & \le U_{\bL} U_{\bar{\bY}} D_{\max} + (U_{\bL}+U_{\bar{\bY}}) \times  \nonumber \\
 & \left(U_{\bL}D_{\max} + (U_{\bL}+U_{\bar{\bY}}) \|\left(\bY^T_\calS \bY_\calS\right)^{-1} - \left(\bar{\bY}_\calS^T \bar{\bY}_\calS\right)^{-1}\|_\infty\right)\\
 & \le D_{\max}U_{\bL} (2U_{\bar{\bY}}+ U_\bL)\left[1+\frac{(U_{\bar{\bY}}+ U_\bL)^2 D_{\max}}{1-\eta}\right].
\end{align}
If $D_{\max}U_{\bL} (2U_{\bar{\bY}}+ U_\bL)\left[1+\frac{(U_{\bar{\bY}}+ U_\bL)^2 D_{\max}}{1-\eta}\right] \le \frac{\gamma}{2}$, we obtain that 
\begin{equation}
 \|\bY_{\calS^c}^T \bY_\calS \left(\bY_\calS^T \bY_\calS\right)^{-1}\|_\infty \le 1-\frac{\gamma}{2}.
\end{equation}\QED
\end{proof}

Now we present the proof of our main theorem. 
\begin{proof}[Proof of Theorem~\ref{thm:main}]
 Note that $\hat{\bM}_2 = \bM_2 + \bE$, and $\|\bE\|_2 \le \|\bE\|_\rmF$, $\|\bE\|_{\ell_\infty/ \ell_2}  \le \|\bE\|_\rmF$, $\|\bE\|_1  = \|\bE\|_\infty \le \sqrt{F}\|\bE\|_\rmF$. 
 Then if 
 \begin{equation}
  \|\bE\|_\rmF \le \min\{ \frac{\sqrt{C_{\min}}}{2}, \frac{\|\bM_2\|_1}{\sqrt{F}},\zeta_1 \},
 \end{equation}
 we have that 
 \begin{itemize}
  \item $\|\bE_\scrK\|_2 \le \frac{\sqrt{C_{\min}}}{2}$;
  \item Let $\bT = \bE_\scrK^T(\hat{\bM}_2)_\scrK + (\hat{\bM}_2)_\scrK^T \bE_\scrK + \bE_\scrK^T \bE_\scrK$, we have that $\eta :=\|\left((\bM_2)_\scrK^T (\bM_2)_\scrK\right)^{-1}\bT\|_\infty \le \frac{1}{2}<1$;
  \item $D_{\max}\|\bE\|_1 (2\|\bM_2\|_1+ \|\bE\|_1)\left[1+\frac{(\|\bM_2\|_1+ \|\bE\|_1)^2 D_{\max}}{1-\eta}\right] \le \frac{\gamma}{2}$.
 \end{itemize}
 By Lemma~\ref{lem:tedious_compt}, the following conditions are satisfied:
 \begin{itemize}
  \item $(\hat{\bM}_2)_\scrK^T (\hat{\bM}_2)_\scrK$ is invertible; 
  \item $ \|(\hat{\bM}_2)_{\scrK^c}^T (\hat{\bM}_2)_\scrK \left((\hat{\bM}_2)_\scrK^T (\hat{\bM}_2)_\scrK\right)^{-1}\|_\infty \le 1-\frac{\gamma}{2}$;
  \item $\|\left((\hat{\bM}_2)_\scrK^T (\hat{\bM}_2)_\scrK\right)^{-1} \|_\infty \le D_{\max}^*$,
 \end{itemize}
 where $D_{\max}^* = D_{\max} + \frac{\|\bE\|_1(2\|\bM_2\|_1+ \|\bE\|_1) D_{\max}^2}{1-\eta}$.
 
 In addition, note that $\hat{\bM}_2 = \bM_2 + \bE = \bM_2 \bX^* + \bE = \hat{\bM}_2 \bX^* + \bE^*$, where $\bE^*:= \bE + \bE \bX^*$. We have 
 \begin{align}
  &\|\bE^*\|_{\ell_\infty/\ell_2} \le \|\bE\|_{\ell_\infty/\ell_2} + \|\bE\|_1 \|\bX^*\|_{\ell_\infty/\ell_2} \\
  &\le \|\bE\|_{\rmF} + \sqrt{F}\|\bE\|_{\rmF} \sqrt{1+r_{\max}^2}.
 \end{align}
When $\|\bE\|_\rmF \le \frac{\|\bM_2\|_1}{\sqrt{F}}$, we have 
\begin{equation}
 D_{\max}^* \le D_{\max} + 6 \|\bM_2\|_1^2 D_{\max}^2.
\end{equation}
Then if 
\begin{equation}
 \|\bE\|_\rmF \le \zeta_2,
\end{equation}
we can choose $\lambda$ such that
\begin{equation}
 \frac{8\|\bM_2\|_1\|\bE\|_\rmF\left(1+\sqrt{F}\sqrt{1+r_{\max}^2}\right)}{\gamma} \le \lambda \le u_{\lambda}.
\end{equation}
For such $\lambda$, we obtain that
\begin{align}
        & D_{\max}^* (\lambda + (\|\bE\|_1 + \|\bM_2\|_1)(\|\bE\|_{\ell_\infty/\ell_2} + \|\bE\|_1\|\bX^*\|_{\ell_\infty/\ell_2})) \nonumber \\
        & \le \frac{1}{2} \min_{i \in \scrK} \|\underline{\bx}_{i}^*\|_2 = \frac{1}{2}\sqrt{1+r_{\min}^2}
       \end{align}
and 
\begin{equation}
 (\|\bE\|_1 + \|\bM_2\|_1) (\|\bE\|_{\ell_\infty/\ell_2} + \|\bE\|_1\|\bX^*\|_{\ell_\infty/\ell_2}) \le \frac{\lambda\gamma}{4}.
\end{equation}
Then by Lemmas~\ref{lem:bd_bE} and~\ref{lem:opt_group_lasso}, for $N$ and $\lambda$ in the ranges given in the statement of the theorem, there exists a unique optimal solution $\hat{\bX}$ for~\eqref{eq:NMF_MMV} such that $\mathrm{Supp}(\hat{\bX}) = \scrK$ with the desired error bound. \QED
\end{proof}

\section{Numerical Results}
\label{sec:num_res}
To demonstrate the efficacy of Algorithm~\ref{algo:tensor_mmv} for estimating $K$, we perform numerical simulations on synthetic datasets. We need to obtain $\hat{\bM}_2$ in the first step of Algorithm~\ref{algo:tensor_mmv}. The time complexity of calculating $\hat{\bM}_4$ is $O(F^4 N)$. However, note that we do not need to calculate $\hat{\bM}_4$ explicitly before calculating $\hat{\bM}_2$. Let $p_n = \sum_{f=1}^F v_{f,n}$ and $q_n = p_n^2$ for $n \in [N]$. Let $\hat{\calA} = \frac{\sum_{n=1}^N \bv_n \otimes \bv_n \otimes \bv_n \otimes \bv_n }{N}$. We have that 
\begin{align}
 &\sum_{l =1}^F \sum_{m=1}^F [\hat{\calA}]_{ijlm}  \nonumber\\
 &= \frac{\sum_{n=1}^N v_{i,n}v_{j,n} (\sum_{l=1}^F v_{l,n}) (\sum_{m=1}^F v_{m,n})}{N} \nonumber\\
 &= \frac{\sum_{n=1}^N p_n^2 v_{i,n}v_{j,n}}{N}.
\end{align}
Let $\bS = (s_{ij}) \in \bbR^{F\times F}$ be the matrix such that 
\begin{equation}
 s_{ij} = \sum_{l =1}^F \sum_{m=1}^F [\hat{\calA}]_{ijlm}.
\end{equation}
We have that
\begin{equation}
 \bS = \frac{\bV \mathrm{Diag}(\bq)\bV^T}{N}. 
\end{equation}
Considering similar reformulations for the summation over components of $\hat{\calT}$, we obtain that 
\begin{equation}\label{eq:hatM2_calc}
 \hat{\bM}_2 = \frac{\bV \mathrm{Diag}(\bq)\bV^T}{N} - \left(\frac{\sum_{n=1}^N q_n}{N^2} \bV\bV^T + 2\frac{\bV \bp \bp^T \bV^T}{N^2}\right).
\end{equation}
The time complexity for calculating $\hat{\bM}_2$ is reduced to $O(F^2 N)$.
In the second step of Algorithm~\ref{algo:tensor_mmv}, we use CVX~\cite{cvx} to obtain a solution $\hat{\bX}$ for~\eqref{eq:NMF_MMV}. $\epsilon$ is set to be $10^{-6}$ in the third step of Algorithm~\ref{algo:tensor_mmv}.

\subsection{Synthetic Datasets}
 We fix $K=10$ and vary $F$ between 20 and 50. We vary the number of samples $N$ from 100 to 10000. We set the dictionary matrix $\bW \in \bbR^{F\times K}$ as $[\bI_K; \tau \bW_c]$, where $\bI_K$ is the identity matrix in $\bbR^{K \times K}$ and $\bW_c \in \bbR^{(F-K) \times K}$ is a random non-negative matrix generated from the command \texttt{rand(F-K,K)} in Matlab. $\tau >0$ is properly chosen such that $\|\bW_2 \bW_1^{-1}\|_\infty < 1$ (cf. \eqref{eq:imp_constraint}). Each entry $h$ of $\bH$ is generated from 
an exponential distribution\footnote{$\mathrm{Exp} (u)$ is the function $x\mapsto u\exp(- u x)1\{x\ge 0\}$.} $\mathrm{Exp} (u)$  with parameter $u = 1$, and then is centralized by $h \leftarrow h - \frac{1}{u}$. 
The regularization parameter $\lambda$ 
is set to be $10$. The data matrix $\bV = \bW \bH + \bZ$ and each entry of the noise matrix $\bZ$ is sampled from a Gaussian distribution $\calN(0,\sigma^2)$ with $\sigma = 0.01$. For each setting of the parameters, we generate 20 data matrices $\bV$ independently. 
From Fig.~\ref{fig:est_K_exp}, we observe that when $F = 20$, the algorithm cannot detect the true $K$ until $N$ is sufficiently large (e.g., $N \ge 6 \times10^3$). When $F=50$, we need a smaller $\tau$ such that $\|\bW_2 \bW_1^{-1}\|_\infty < 1$, and the algorithm works well even when the sample size is relatively small. 

\begin{figure}[t]
\subfloat{\includegraphics[width=.5\columnwidth]{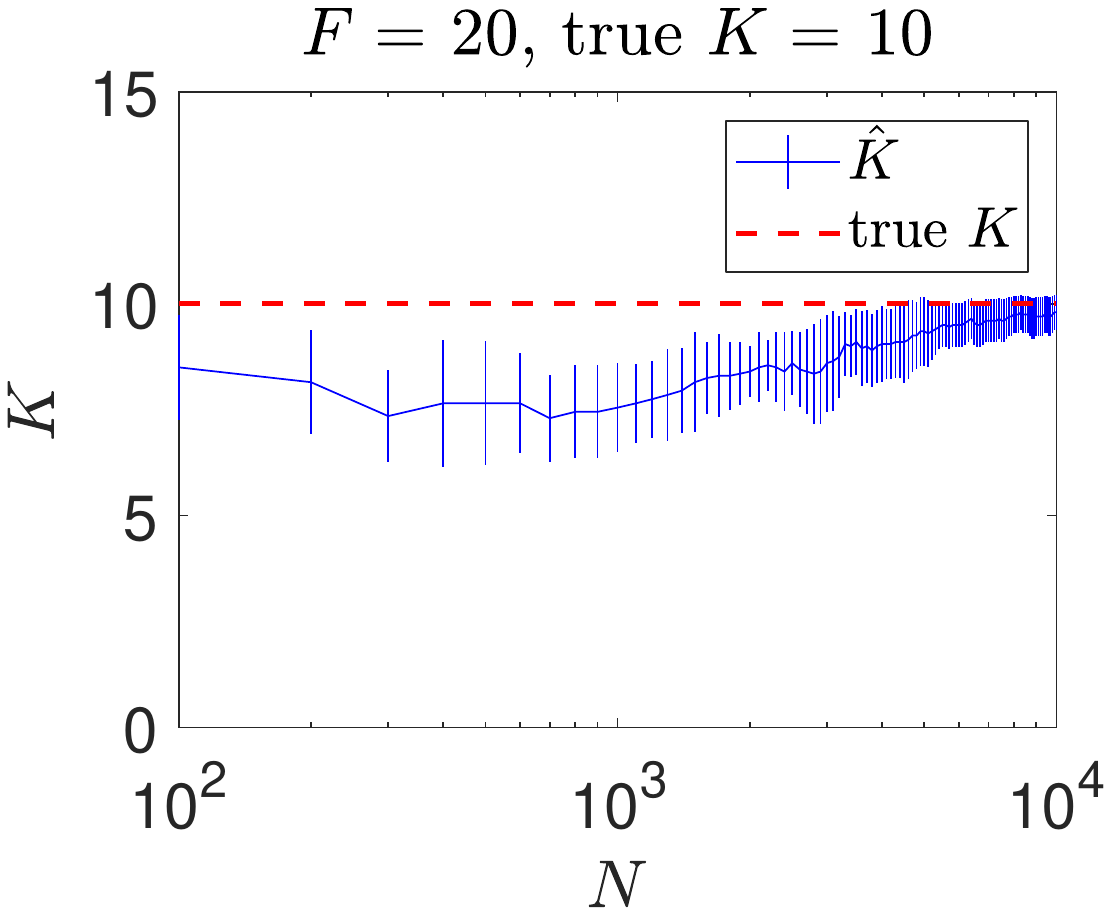}}\hfill
\subfloat{\includegraphics[width=.475\columnwidth]{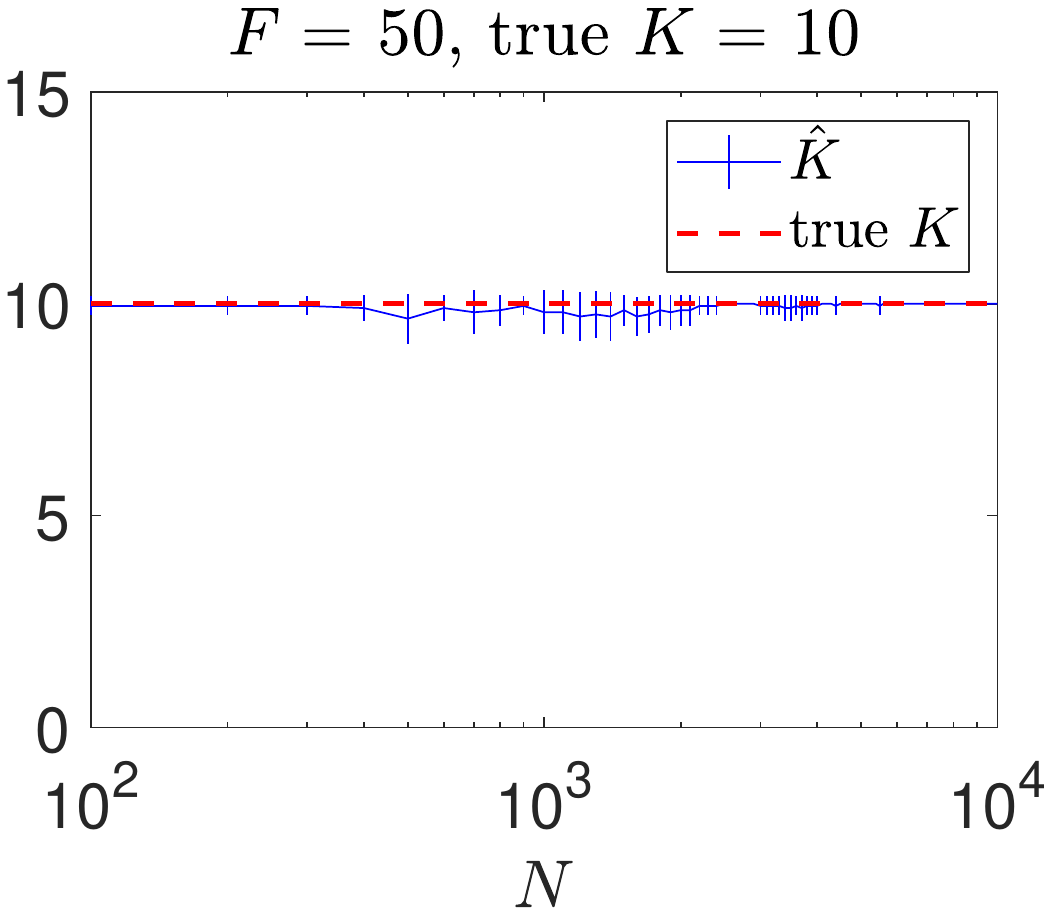}}
\caption{Estimated number of components $K$ with different $F$ for exponential distribution. The error bars denote one standard deviation away from the mean.}\label{fig:est_K_exp}
\end{figure}

\subsection{The Swimmer Dataset}
We perform experiments on the well-known 
swimmer~\cite{gillis2014robust} dataset, which is widely used for benchmarking NMF algorithms. 
The swimmer dataset we use contains $256$ binary images ($20$-by-$11$ pixels) which depict figures with four limbs, each can be in four different positions. The 
latent dimensionality of the corresponding data matrix is $16$.
From the regularization path for this dataset presented in Fig.~\ref{fig:swimmer}, we observe that the estimated latent dimensionality $\hat{K}$ is always $14$ when $10^{-5}\le\lambda \le 10^9$. In addition, the relative error $\frac{\|\hat{\bM}_2 - \hat{\bM}_2\hat{\bX}\|_\rmF}{\|\hat{\bM}_2\|_\rmF}$ is close to $0$ when $\lambda\le 10^4$ and becomes intolerably large (larger than $0.75$) when $\lambda \ge 10^8$. Therefore, a reasonable estimate for the latent dimensionality is $14$, which is close to the true latent dimensionality.

\begin{figure}[t]
\subfloat{\includegraphics[width=.475\columnwidth]{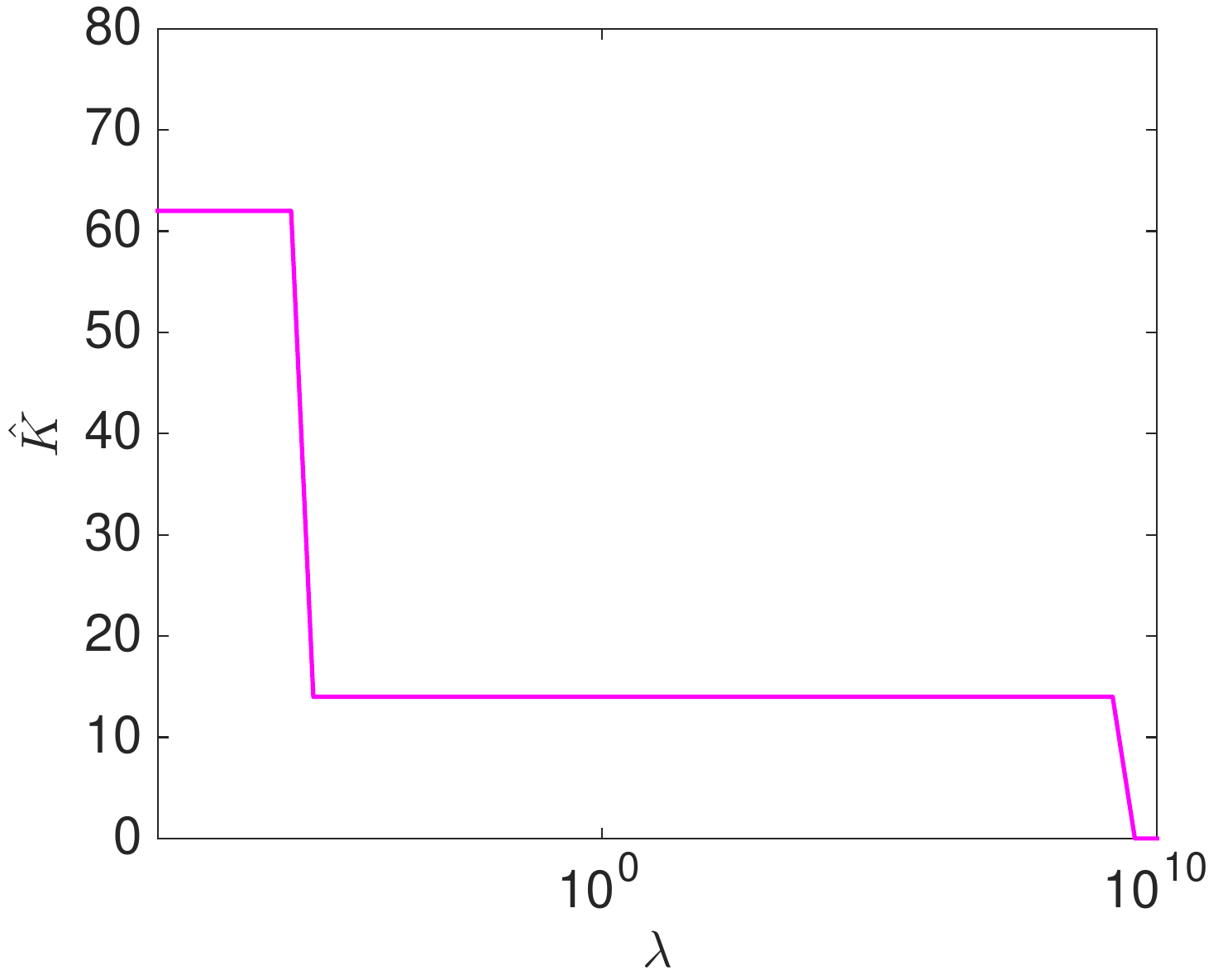}}\hfill
\subfloat{\includegraphics[width=.475\columnwidth]{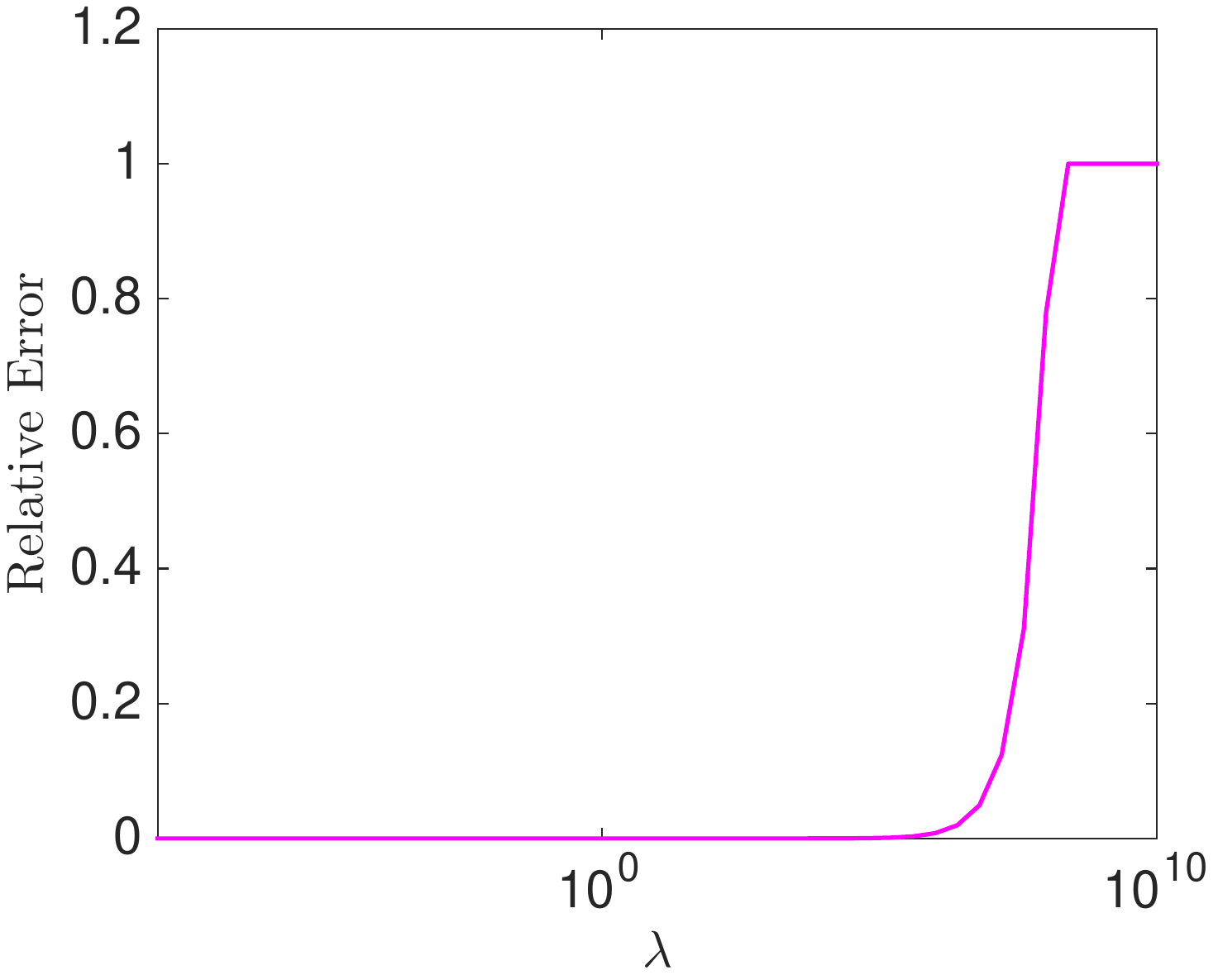}}
\caption{Estimated number of components and relative error with different $\lambda$ for the swimmer dataset.}\label{fig:swimmer}
\end{figure}

\section{Future Work}
\label{sec:futureWork}

We would like to extend our theoretical analysis to other latent variable models, such as Gaussian mixture models~\cite{titterington1985,liu2017informativeness} and latent Dirichlet allocation~\cite{blei2003latent, cheng2015model}. The parameter $F$ plays an important role in the time complexity of Algorithm~\ref{algo:tensor_mmv} and $F$ is very large for certain real data. We may consider combining dimensionality reduction techniques with Algorithm~\ref{algo:tensor_mmv} to reduce the running time. Finally, we hope to provide sufficient conditions for the existence of the index set $\scrK$ such that $\|\bW_2 \bW_1^{-1}\|_\infty < 1$ (cf. \eqref{eq:imp_constraint}). 

\bibliographystyle{IEEEbib}
\bibliography{modelSel_NMF}
\end{document}